\providecommand{\algorithmname}{Algorithm}
\newtheorem{thm}{\protect\theoremname}
\newtheorem{fact}[thm]{\protect\factname}
\newtheorem{lem}[thm]{\protect\lemmaname}
\newtheorem{rem}[thm]{\protect\remarkname}
\newenvironment{proof}[1][\protect\proofname]{\par
	\normalfont\topsep6\p@\@plus6\p@\relax
	\trivlist
	\itemindent\parindent
	\item[\hskip\labelsep\scshape #1]\ignorespaces
}{%
	\endtrivlist\@endpefalse
}
\providecommand{\proofname}{Proof}
\providecommand{\factname}{Fact}
\providecommand{\lemmaname}{Lemma}
\providecommand{\remarkname}{Remark}
\providecommand{\theoremname}{Theorem}
\title[Acceleration in Non-Convex Stochastic Optimization with Heavy-Tailed Noise]{Breaking the Lower Bound with (Little) Structure: Acceleration in Non-Convex Stochastic Optimization with Heavy-Tailed Noise}
\begin{document}
\global\long\def\E{\mathbb{\mathbb{E}}}%
\global\long\def\F{\mathcal{F}}%
\global\long\def\R{\mathbb{R}}%
\global\long\def\domxi{\mathcal{D}}%
\global\long\def\bzero{\mathbb{\mathbf{\mathbf{0}}}}%
\global\long\def\sgn{\text{sgn}}%
\global\long\def\na{\nabla}%
\global\long\def\indi{\mathds{1}}%

% \title{Breaking the Lower Bound with (Little) Structure: Acceleration in
% Non-Convex Stochastic Optimization with Heavy-Tailed Noise}
\maketitle
\begin{abstract}
In this paper, we consider the stochastic optimization problem with smooth but not necessarily convex objectives in the heavy-tailed noise regime, where the stochastic gradient's noise is assumed to have bounded $p$th moment ($p\in(1,2]$). This is motivated by a recent plethora of studies in the machine learning literature, which point out that, in comparison to the standard finite-variance assumption, the heavy-tailed noise regime is more appropriate for modern machine learning tasks such as training neural networks. In the heavy-tailed noise regime, \cite{zhang2020adaptive} is the first to prove the $\Omega(T^{\frac{1-p}{3p-2}})$ lower bound for convergence (in expectation) and provides a simple clipping algorithm that matches this optimal rate. Later, \cite{cutkosky2021high} proposes another algorithm, which is shown to achieve the nearly optimal high-probability convergence guarantee $O(\log(T/\delta)T^{\frac{1-p}{3p-2}})$, where $\delta$ is the probability of failure. However, this desirable guarantee is only established under the additional assumption that the stochastic gradient itself is bounded in $p$th moment, which fails to hold even for quadratic objectives and centered Gaussian noise. 

In this work, we first improve the analysis of the algorithm in \cite{cutkosky2021high} to obtain the same nearly optimal high-probability convergence rate $O(\log(T/\delta)T^{\frac{1-p}{3p-2}})$, without the above-mentioned restrictive assumption.
% , thereby establishing that $\tilde{\Theta}(T^{\frac{1-p}{3p-2}})$ is in fact the optimal (up to log factors) convergence rate for the general class of non-convex stochastic optimization problems.
Next, and curiously, we show that one can achieve a faster rate than that dictated by the lower bound $\Omega(T^{\frac{1-p}{3p-2}})$ with only a tiny bit of structure, i.e., when the objective function $F(x)$ is assumed to be in the form of $\E_{\Xi\sim\domxi}[f(x,\Xi)]$, arguably the most widely applicable class of stochastic optimization problems. For this class of problems, we propose the first variance-reduced accelerated algorithm and establish that it guarantees a high-probability convergence rate of $O(\log(T/\delta)T^{\frac{1-p}{2p-1}})$ under a mild condition, which is faster than $\Omega(T^{\frac{1-p}{3p-2}})$. Notably, even when specialized to the standard finite-variance case ($p =2$), our result yields the (near-)optimal high-probability rate $O(\log(T/\delta)T^{-1/3})$, which is unknown before.
\end{abstract}

\begin{keywords}%
  Non-convex stochastic optimization, variance reduction, accelerated algorithm
\end{keywords}

\section{Introduction}

In this paper, we consider the optimization problem with the objective
function $F(x):\R^{d}\to\R$, where $F(x)$ is smooth but not necessarily
convex. With a gradient oracle (i.e. $\na F(x)$
accessible at every $x\in\R^{d}$), it is well-known that Gradient
Descent (GD) algorithm converges in the rate of $\Theta(T^{-1/2})$ after $T$
iterations for finding the critical point. However, in practical scenarios,
(at best) an unbiased noisy estimate $\widehat{\na}F(x)$ ($\E[\widehat{\na}F(x)\vert x]=\na F(x)$)
is available. In such cases, the gold standard is Stochastic Gradient Descent (SGD), a classical
first-order method that has been widely deployed for modern machine learning tasks (such as training deep neural networks). 
Motivated by its empirical success, \cite{ghadimi2013stochastic} has characterized its theoretical guarantees for non-convex objectives and is the
first to establish that SGD converges in expectation with the rate
of $O(T^{-1/4})$ for minimizing the gradient norm under the 
standard finite-variance assumption.
The rate $O(T^{-1/4})$ is also shown to be optimal without any additional assumptions \citep{arjevani2019lower}. 

However, recent studies~\citep{simsekli2019tail,csimcsekli2019heavy,zhang2020adaptive} point out that assuming finite-variance noise
is too optimistic for modern machine learning tasks (in particular,
for training neural networks) and
it is more appropriate to assume that the noise only has bounded $p$th moment, i.e., $\E[\|\widehat{\na}F(x)-\na F(x)\|^{p}\vert x]\leq\sigma^{p}$ for some $p\in(1,2]$, which is known as the heavy-tailed regime. This brings significant challenges in algorithmic and theoretical fronts, as SGD may fail
to converge and the existing theory for SGD becomes invalid when $p\neq 2$. 

Recently, exciting new developments have made progress in overcoming these challenges. In particular, \cite{zhang2020adaptive, cutkosky2021high} propose two new provable algorithms, both of which employ the clipping gradient method.
More precisely, given a stochastic gradient $\widehat{\na}F(x)$,
they both consider a new truncated random variable $g=\min\left\{ 1,\frac{M}{\|\widehat{\na}F(x)\|}\right\} \widehat{\na}F(x)$
where $M$ is the clipping magnitude, in replacement of using $\widehat{\na}F(x)$
directly. As long as $M$ is picked appropriately, \cite{zhang2020adaptive}
shows a convergence rate (in expectation) of $O(T^{\frac{1-p}{3p-2}})$
for SGD combined with clipping directly, which matches the
(in-expectation) lower bound $\Omega(T^{\frac{1-p}{3p-2}})$ proved in the same paper. 
A step further,
\cite{cutkosky2021high} studies the high-probability convergence
behavior, which provides a stronger guarantee for each individual run.
Their (modified) SGD with clipping is shown to converge at
a 
% (optimal-up-to-log-factors)
rate of $O(\log(T/\delta)T^{\frac{1-p}{3p-2}})$ with probability
at least $1-\delta$. Unfortunately, the result in \cite{cutkosky2021high}
requires that the stochastic
gradients are bounded in $p$th moment: $\E[\|\widehat{\na}F(x)\|^{p}\vert x]\leq G^{p}$
for some $G>0$, which is
rather restrictive since it can not even hold when $F(x)$ is considered
as quadratic and $\widehat{\na}F(x)-\na F(x)$ is an independent centered
Gaussian random variable (note that the tail of the Gaussian noise
is very light). As such,  when only considering the standard bounded $p$th moment noise assumption, 
% $\E[||\widehat{\na}F(x)-\na F(x)\|^{p}\vert x]\leq\sigma^{p}$,
we are naturally led to the following question:
\begin{center}
\textit{Q1: Is it possible to design an algorithm with a provable
high-probability convergence guarantee that (nearly) matches the lower
bound $\Omega(T^{\frac{1-p}{3p-2}})$ for the general class of problems?}
\par\end{center}

Moving beyond the general stochastic optimization problem, 
a particular -- but still general enough -- subclass of problems that are of special interest are $F(x)=\E_{\Xi\sim\domxi}[f(x,\Xi)]$,
where $f(x,\Xi)$ is assumed to be differentiable with respect to
$x$ for every realization of $\Xi$ drawn from a (possibly unknown)
probability distribution $\domxi$. In general, $\na f(x,\Xi)$
is an unbiased estimator of $\na F(x)$: $\E_{\Xi\sim\domxi}[\na f(x,\Xi)\vert x]=\na F(x)$. This structure has attracted significant attention from the optimization
community as many modern machine learning problems can be formulated
in such a form. A recent breakthrough to improve the performance
of algorithms for solving this class of problems is to add the variance
reduction, which is shown to achieve acceleration. More specifically, under the finite-variance  ($p=2$) and additional averaged
smoothness assumptions (i.e. $\E_{\Xi\sim\domxi}[\|\na f(x,\Xi)-\na f(y,\Xi)\|^{2}]\leq L^{2}\|x-y\|^{2}$,
$\forall x,y\in\R^{d}$), \cite{fang2018spider,cutkosky2019momentum,tran2019hybrid,liu2020optimal,li2021page}
propose different algorithms with the same convergence rate of $O(T^{-1/3})$
in expectation, which matches the lower bound in~\cite{arjevani2019lower} and is also faster than the generic $\Theta(T^{-1/4})$ for SGD.

However, it still remains open whether -- and if so, how -- similar acceleration can be achieved in the presence of heavy-tailed noise when $p\in(1,2)$. If the convergence rate as a function of $p$ is continuous (a big ``if" that by no means holds \textit{a priori}), then one would have some hope to do better than  the general lower bound $\Omega(T^{\frac{1-p}{3p-2}})$, since when $p=2$, $\Omega(T^{\frac{1-p}{3p-2}})$ yields $\Omega(T^{-1/4})$, which we know can be improved to $\Theta(T^{-1/3})$ in the subclass. Whereas, it is unclear which acceleration scheme -- if any -- would be effective in the heavy-tailed noise setting (under the subclass of the problems of the form $F(x)=\E_{\Xi\sim\domxi}[f(x,\Xi)]$), thereby leading to the second main question:
\begin{center}
\textit{Q2: Is it possible to find an algorithm with a provable convergence
guarantee that outperforms the general lower bound $\Omega(T^{\frac{1-p}{3p-2}})$ and (nearly) matches the optimal $\Theta(T^{-1/3})$ rate when $p=2$?}
\par\end{center}

\subsection{Our Contributions}

We provide affirmative answers to both questions. 
For \textit{Q1}, surprisingly, the algorithm in \cite{cutkosky2021high}
without any modification is enough: under an improved
analysis, the additional assumption of bounded $p$th moment stochastic
gradients can be removed. 
To do so,  we revisit the algorithm Normalized SGD with Clipping and
Momentum, proposed in \cite{cutkosky2021high} and improve the analysis
by employing the proof idea from \cite{gorbunov2020stochastic} to
obtain a better result. To be more precise, \textit{without} the assumption
of bounded $p$th moment stochastic gradients and therefore for the general class of non-convex stochastic optimization problems, we show that the algorithm can converge at the rate of $O(\log(T/\delta)T^{\frac{1-p}{3p-2}})$ with probability at least $1-\delta$. 

For \textit{Q2}, we provide a new and \textit{the
first} \textit{accelerated} algorithm for the heavy-tailed noise setting
and establish the convergence rate of $O(\log(T/\delta)T^{\frac{1-p}{2p-1}})$
 with probability at least $1-\delta$ under a mild condition, which is faster than the general lower bound \textit{$\Omega(T^{\frac{1-p}{3p-2}})$}
and reduces to the nearly optimal rate $O(\log(T/\delta)T^{-1/3})$
when $p=2$.
Our algorithm is designed by integrating
a new variant of the variance-reduced gradient estimator \citep{cutkosky2019momentum, tran2019hybrid, liu2020optimal}
into the Normalized SGD with Clipping and Momentum algorithm.
To the best of our knowledge, this is the first algorithm provably
guaranteeing a faster convergence rate compared with the existing
lower bound $\Omega(T^{\frac{1-p}{3p-2}})$ that is proved for
the general heavy-tailed noise problem.
When specialized to $p=2$, our result yields the nearly optimal high-probability $\widetilde{\Theta}(T^{-1/3})$ rate for the standard finite-variance setting, thereby improving the existing state of knowledge where only in-expectation bound (of the same rate) is known. 

\subsection{Related Work}

\textbf{Convergence with heavy-tailed noise: }When the noise is assumed
to have the finite $p$th moment for $p\in(1,2]$, \cite{zhang2020adaptive}
is the first to prove an $O(T^{\frac{1-p}{3p-2}})$ convergence rate
in expectation by combining SGD and clipping directly. Later, \cite{cutkosky2021high}
proposes a novel algorithm, Normalized SGD with Clipping and Momentum, which enjoys
the provable high-probability convergence behavior attaining the rate
of $O(\log(T/\delta)T^{\frac{1-p}{3p-2}})$ with probability at least
$1-\delta$ after $T$ iterations running. Compared with\textbf{ }\cite{zhang2020adaptive},
the convergence rate is almost the same up to an extra logarithmic
factor, whereas, \cite{cutkosky2021high} requires the additional
restrictive assumption of bounded $p$th moment gradient estimators.
\cite{gorbunov2020stochastic} is the first to prove the high-probability
bounds of clipping algorithms for smooth convex optimization on $\R^{d}$
when $p=2$. We extend their proof idea in this work.

\textbf{Lower bound with heavy-tailed noise: }As far as we know, the
only existing lower bound when considering the heavy-tailed noise
appears in \cite{zhang2020adaptive}, in which the authors prove that
the convergence rate of any algorithm for finding the critical point
can not exceed $\Omega(T^{\frac{1-p}{3p-2}})$ when the objective
function is assumed to be smooth. This means that the results in \cite{zhang2020adaptive,cutkosky2021high}
are both (nearly) optimal. However, if $F(x)$ admits the special
structure $F(x)=\E_{\Xi\sim\domxi}[f(x,\Xi)]$ and satisfies the averaged
smoothness property, the lower bound $\Omega(T^{-1/4})$ is not tight
anymore when $p=2$. \cite{arjevani2019lower} provides an improved
result $\Omega(T^{-1/3})$ for this special case. But if $p$ is considered
as strictly smaller than $2$, whether a tighter lower bound exists
or not remains unknown.

\textbf{Variance reduction for stochastic optimization:} \cite{roux2012stochastic,johnson2013accelerating,shalev2013stochastic,mairal2013optimization,defazio2014saga}
first introduce the variance reduction technique to speed up the convergence
when the objective function is convex and defined in the finite-sum
form. After the important intermediate work of \cite{allen2017katyusha},
\cite{lan2019unified,zhou2019direct,song2020variance,liu2022adaptive,carmon2022recapp}
propose different algorithms provably attaining the near-optimal or
optimal convergence rate under different situations. For non-convex
problems, the variance reduction technique also has been proved to
improve the convergence rate in different settings. When $F(x)=\E_{\Xi\sim\domxi}[f(x,\Xi)]$,
a large number of works \citep{fang2018spider, cutkosky2019momentum, tran2019hybrid, liu2020optimal, li2021page}
prove the $O(T^{-1/3})$ convergence rate in expectation for new algorithms,
which improves upon the well-known speed of $\Theta(T^{-1/4})$ for
the vanilla SGD or momentum SGD and matches the lower bound of $\Omega(T^{-1/3})$
\citep{arjevani2019lower} under the averaged smoothness assumption.
However, to the best of our knowledge, no results have been established
for heavy-tailed noises among all existing works related to variance
reduction.

\section{Preliminaries}

\textbf{Notations}: Let $\left[d\right]$ denote the set $\left\{ 1,2,\cdots,d\right\} $
for any integer $d\geq1$. $\langle\cdot,\cdot\rangle$ is the standard
Euclidean inner product on $\R^{d}$. $\left\Vert \cdot\right\Vert $
represents $\ell_{2}$ norm. $a\land b$ and $a\lor b$ are defined
as $\min\left\{ a,b\right\} $ and $\max\left\{ a,b\right\} $ respectively.
$\sgn(x)$ indicates the sign function satisfying $\sgn(x)=1$ for
$x\geq0$ and $-1$ otherwise.

We focus on the following two non-convex optimization problems in
this work, 

\textbf{P1}: We consider the following problem 
\begin{equation}
\min_{x\in\R^{d}}F(x)\label{eq:P1}
\end{equation}
where the function $F(x)$ is only assumed to be differentiable on
$\R^{d}$ but without any special structure.

\textbf{P2}: In this problem, our objective function is chosen to
have the following special form
\begin{equation}
\min_{x\in\R^{d}}F(x)=\E_{\Xi\sim\domxi}\left[f(x,\Xi)\right]\label{eq:P2}
\end{equation}
where $\Xi$ obeys a (possibly unknown) probability distribution $\domxi$.
We will omit the writing of the distribution $\domxi$ for simplicity
in the remaining paper. In this case, we assume that both $F(x)$
and $f(x,\Xi)$ are differentiable with respect to any $x$ on $\R^{d}$.
$\na f(x,\Xi)$ denotes the gradient taken on $x$ for any realization
$\Xi$ drawn from the distribution $\domxi$.

We note that P1 and P2 can cover most non-convex stochastic optimization
problems, hence which are very general. Additionally, our analysis
relies on the following assumptions.

\textbf{(1) Finite lower bound}: $F_*= \inf_{x\in\R^d}F(x)>-\infty$.

\textbf{(2) Unbiased gradient estimator}: We are able to access a
history-independent, unbiased gradient estimator for both P1 and P2.
More specifically, for P1, a stochastic gradient estimator $\widehat{\na}F$
satisfying $\E[\widehat{\na}F(x)\vert x]=\na F(x),\forall x\in\R^{d}$
is provided; for P2, we are able to draw independent $\Xi$ from the
distribution $\domxi$ and compute $\na f(x,\Xi)$ satisfying $\E[\na f(x,\Xi)\vert x]=\na F(x)$.

\textbf{(3) Bounded $p$th moment noise}: There exist $p\in\left(1,2\right]$
and $\sigma\geq0$ denoting the noise level such that $\E[\|\widehat{\na}F(x)-\na F(x)\|^{p}\vert x]\leq\sigma^{p}$
for P1 and $\E[\|\na f(x,\Xi)-\na F(x)\|^{p}\vert x]\leq\sigma^{p}$
for P2.

\textbf{(4) $L$-smoothness}: $\exists L>0$ such that $\forall x,y\in\R^{d}$,
$\|\na F(x)-\na F(y)\|\leq L\|x-y\|$ for P1 and $\|\na f(x,\Xi)-\na f(y,\Xi)\|\leq L\|x-y\|$
with probability $1$ for P2.

Here we briefly discuss our assumptions. First, Assumptions (1) and
(2) are common in the related literature on stochastic optimization
problems. Assumption (3) is the definition of the heavy-tailed noise,
which includes the widely used finite variance assumption as a subcase
by considering $p=2$. Assumption (4) for P1 is standard for smooth
optimization problems. Though the almost surely smoothness property
for P2 seems stronger, it is realistic in practice and used in lots
of works, e.g., \cite{cutkosky2019momentum,levy2021storm+}. We remark
that Assumption (4) for P2 implies that $F(x)$ itself is also $L$-smooth.
In section \ref{sec:open}, we discuss the limitation of this assumption
for P2. The following two facts are well-known results under our assumptions,
the proof of which can be found in \cite{nesterov2018lectures,lan2020first},
hence, is omitted here.
\begin{fact}
\label{fact:fact-1}Under Assumption (4), we have $F(x)\leq F(y)+\langle\na F(y),x-y\rangle+\frac{L}{2}\left\Vert x-y\right\Vert ^{2}$,
$\forall x,y\in\R^{d}$.
\end{fact}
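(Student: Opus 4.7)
The plan is to prove the standard descent lemma via the fundamental theorem of calculus applied along the line segment from $y$ to $x$, combined with the $L$-Lipschitz property of $\na F$. Since $F$ is differentiable on $\R^{d}$ and $\na F$ is continuous (being $L$-Lipschitz by Assumption (4)), the map $t\mapsto F(y+t(x-y))$ is continuously differentiable on $[0,1]$, and its derivative equals $\langle \na F(y+t(x-y)), x-y\rangle$ by the chain rule. Integrating from $0$ to $1$ yields the integral identity
\begin{equation*}
F(x)-F(y)=\int_{0}^{1}\langle \na F(y+t(x-y)),x-y\rangle\,dt.
\end{equation*}

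Next I would subtract the linear term $\langle\na F(y),x-y\rangle = \int_0^1 \langle\na F(y), x-y\rangle\, dt$ from both sides, obtaining
\begin{equation*}
F(x)-F(y)-\langle\na F(y),x-y\rangle=\int_{0}^{1}\langle \na F(y+t(x-y))-\na F(y),x-y\rangle\,dt.
\end{equation*}
Then I would bound the integrand by Cauchy--Schwarz, so that it is at most $\|\na F(y+t(x-y))-\na F(y)\|\cdot\|x-y\|$, and invoke Assumption (4) applied to the pair $(y+t(x-y), y)$ to get $\|\na F(y+t(x-y))-\na F(y)\|\leq L\|t(x-y)\|=tL\|x-y\|$. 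Substituting and computing $\int_{0}^{1}tL\|x-y\|^{2}\,dt=\frac{L}{2}\|x-y\|^{2}$ yields the desired inequality.

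There is essentially no obstacle here; this is a textbook computation. The only mild subtlety is justifying the interchange of integration and the use of the chain rule, which is immediate from continuous differentiability of $F$ (guaranteed by $L$-smoothness of $\na F$). Note that the argument only uses the one-sided bound $F(x)-F(y)-\langle\na F(y),x-y\rangle\leq\frac{L}{2}\|x-y\|^{2}$, so the absolute value in the Cauchy--Schwarz step is not strictly needed. The proof carries through verbatim in the P2 setting as well, since Assumption (4) there implies $F$ itself is $L$-smooth, which is the only property invoked.
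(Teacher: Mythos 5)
Your proof is correct and is precisely the standard descent-lemma argument (fundamental theorem of calculus along the segment, Cauchy--Schwarz, then the Lipschitz bound on $\na F$). The paper omits the proof of this fact and simply cites \cite{nesterov2018lectures,lan2020first}, where this exact argument appears, so your proposal coincides with the intended one.
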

\begin{fact}
\label{fact:fact-2}Under Assumptions (1) and (4), we have $\left\Vert \na F(x)\right\Vert \leq\sqrt{2L(F(x)-F_*)}$,
$\forall x\in\R^{d}$.
\end{fact}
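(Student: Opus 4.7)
The plan is to derive the bound by applying the smoothness upper bound (Fact \ref{fact:fact-1}) at a carefully chosen test point, and then exploiting the fact that $F$ cannot dip below its infimum $F_*$. This is the classical ``descent lemma'' argument; the key insight is that the quadratic upper model given by smoothness can be minimized explicitly in closed form, which turns a one-sided inequality on function values into a one-sided inequality on the gradient norm.

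Concretely, fix an arbitrary $x \in \R^d$ and apply Fact \ref{fact:fact-1} with $y$ replaced by $x - \frac{1}{L}\na F(x)$. The right-hand side becomes $F(x) + \langle \na F(x), -\frac{1}{L}\na F(x)\rangle + \frac{L}{2}\|\frac{1}{L}\na F(x)\|^2 = F(x) - \frac{1}{2L}\|\na F(x)\|^2$. Therefore
\begin{equation*}
F\!\left(x - \tfrac{1}{L}\na F(x)\right) \leq F(x) - \tfrac{1}{2L}\|\na F(x)\|^2.
\end{equation*}

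Now I invoke Assumption (1): since $F$ is bounded below by $F_*$ on all of $\R^d$, the left-hand side is at least $F_*$. Rearranging gives $\|\na F(x)\|^2 \leq 2L(F(x)-F_*)$, and taking square roots (both sides are nonnegative, and $F(x) \geq F_*$ is guaranteed by the definition of $F_*$) yields the claimed inequality. Since $x$ was arbitrary, the bound holds for every $x \in \R^d$.

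There is no real obstacle here: the proof is two lines once one notices that the point $x - \frac{1}{L}\na F(x)$ is the minimizer of the quadratic upper bound $y \mapsto F(x) + \langle \na F(x), y-x\rangle + \frac{L}{2}\|y-x\|^2$. The only subtlety worth flagging in a writeup is that Assumption (1) is used in exactly one place, namely to lower-bound $F$ at the shifted point by $F_*$; without a finite $F_*$ the argument would collapse. The paper correctly notes that this is a standard result and defers to \cite{nesterov2018lectures,lan2020first}, so a brief two-line proof suffices.
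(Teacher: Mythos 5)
Your proof is correct and is precisely the standard ``descent lemma at the gradient-step point'' argument that the cited references (\cite{nesterov2018lectures,lan2020first}) give for this fact; the paper itself omits the proof and defers to those references, so there is nothing to contrast against. One cosmetic slip: you say you apply Fact~\ref{fact:fact-1} ``with $y$ replaced by $x-\tfrac{1}{L}\na F(x)$,'' but in the notation of Fact~\ref{fact:fact-1} the shifted point actually plays the role of the first argument $x$ while the fixed point plays the role of $y$ (the linearization center); your displayed computation is the correct one, so the conclusion is unaffected, but the sentence mislabels the substitution.
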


\section{Algorithms and Results\label{sec: algo}}

In this section, we first state the improved result for Algorithm
\ref{alg:algo-1} proposed by \cite{cutkosky2021high} in Section
\ref{subsec:improved-ashok}. Then we present our new Algorithm \ref{alg:algo-2}
along with its convergence theorem in Section \ref{subsec:new-algo}.
Our algorithm is the first to achieve the accelerated convergence
rate $O(\log(T/\delta)T^{\frac{1-p}{2p-1}})$ beyond the known lower
bound $\Omega(T^{\frac{1-p}{3p-2}})$.

\subsection{Improved Result for the Existing Algorithm\label{subsec:improved-ashok}}

\begin{algorithm}[h]
\caption{\label{alg:algo-1}Normalized SGD with Clipping and Momentum \citep{cutkosky2021high}}

\textbf{Input}: $x_{1}\in\R^{d}$, $0\leq\beta<1$, $M>0$, $\eta>0$.

Set $d_{0}=\bzero$

\textbf{for} $t=1$ \textbf{to} $T$ \textbf{do}

$\quad$$g_{t}=\left(1\land\frac{M}{\left\Vert \widehat{\na}F(x_{t})\right\Vert }\right)\widehat{\na}F(x_{t})$

$\quad$$d_{t}=\beta d_{t-1}+\left(1-\beta\right)g_{t}$

$\quad$$x_{t+1}=x_{t}-\eta\frac{d_{t}}{\left\Vert d_{t}\right\Vert }$

\textbf{end for}
\end{algorithm}

We introduce every part of Algorithm \ref{alg:algo-1} here briefly
and refer the reader to \cite{cutkosky2021high} for details. Algorithm
\ref{alg:algo-1} integrates three main techniques: gradient clipping,
momentum update and normalization. The clipped gradient is to deal
with the heavy-tailed noise issue. Injecting momentum into the (clipped)
stochastic gradient vector can be viewed as to correct update direction.
Normalizing in the update rule allows to significantly simplify the
analysis.

Our improved theoretical result is shown in Theorem \ref{thm:thm-algo-1}.
Unlike \cite{cutkosky2021high}, we no more require the restrictive
assumption of bounded $p$th moment stochastic gradients but obtain
the same convergence rate, which is known to be optimal up to a logarithmic
factor.
\begin{thm}
\label{thm:thm-algo-1}Considering P1 with Assumptions (1), (2), (3)
and (4), let $\Delta_{1}=F(x_{1})-F_*$, then under the following
choices after $T$ iterations running
\begin{align*}
\beta & =1-T^{-\frac{p}{3p-2}};\qquad M=\frac{\sigma}{\left(1-\beta\right)^{1/p}}\lor4\sqrt{L\Delta_{1}};\\
\eta & =\sqrt{\frac{\left(1-\beta\right)\Delta_{1}}{6TL}}\land\frac{1-\beta}{9\beta}\sqrt{\frac{\Delta_{1}}{L}}\land\frac{\Delta_{1}}{120TM\left(1-\beta\right)\log\frac{4T}{\delta}}.
\end{align*}
Algorithm \ref{alg:algo-1} guarantees that with probability at least
$1-\delta$, there is
\[
\frac{1}{T}\sum_{t=1}^{T}\left\Vert \na_{t}\right\Vert =O\left(\frac{\sqrt{L\Delta_{1}}}{T^{\frac{p-1}{3p-2}}}\lor\frac{\sigma\log\frac{T}{\delta}}{T^{\frac{p-1}{3p-2}}}\lor\frac{\sqrt{L\Delta_{1}}\log\frac{T}{\delta}}{T^{\frac{p}{3p-2}}}\right)=O\left(\frac{\log\frac{T}{\delta}}{T^{\frac{p-1}{3p-2}}}\right).
\]
\end{thm}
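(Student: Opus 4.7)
The plan is to follow the standard descent-lemma road map for normalized momentum SGD, reducing the convergence bound to a high-probability control of the momentum error $\epsilon_t := d_t - \nabla F(x_t)$, and then to replace the ``bounded $p$th moment of $\widehat{\nabla}F$'' shortcut of \cite{cutkosky2021high} by an inductive good-event argument in the spirit of \cite{gorbunov2020stochastic}. First I would write down the one-step descent: since $x_{t+1}=x_t-\eta d_t/\|d_t\|$, Fact \ref{fact:fact-1} plus the identity $\langle \nabla F(x_t),d_t/\|d_t\|\rangle \geq \|\nabla F(x_t)\|-2\|\epsilon_t\|$ yields
\[
F(x_{t+1})\leq F(x_t)-\eta\|\nabla F(x_t)\|+2\eta\|\epsilon_t\|+\tfrac{L\eta^2}{2}.
\]
Telescoping and dividing by $T\eta$ gives $\tfrac{1}{T}\sum_t\|\nabla F(x_t)\|\leq \Delta_1/(T\eta)+L\eta/2+(2/T)\sum_t\|\epsilon_t\|$, so everything reduces to a high-probability bound of the form $\sum_t\|\epsilon_t\|\lesssim T(1-\beta)^{1/p}\sigma\log(T/\delta)+\ldots$.

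Next I would unroll the momentum recursion. Writing $g_t=\mathbb{E}[g_t\mid x_t]+\zeta_t$, the error splits as $\epsilon_t=\beta\epsilon_{t-1}+\beta(\nabla F(x_t)-\nabla F(x_{t-1}))+(1-\beta)b_t+(1-\beta)\zeta_t$, where $b_t:=\mathbb{E}[g_t\mid x_t]-\nabla F(x_t)$ is the clipping bias and $\zeta_t$ is a martingale difference. Iterating produces three contributions: (i) a drift $\beta\sum_{s\leq t}\beta^{t-s}\|\nabla F(x_s)-\nabla F(x_{s-1})\|\leq L\eta\beta/(1-\beta)$ by $L$-smoothness and $\|x_{s}-x_{s-1}\|=\eta$; (ii) a bias $\|b_t\|\leq \sigma^p/M^{p-1}$, valid \emph{provided} $\|\nabla F(x_t)\|\leq M/2$, which in turn follows from Fact \ref{fact:fact-2} and a bound on $F(x_t)-F_*$; and (iii) the martingale sum $(1-\beta)\sum_{s\leq t}\beta^{t-s}\zeta_s$, which is almost-surely bounded by $2M$ per summand and has conditional variance at most $M^{2-p}\mathbb{E}[\|g_t\|^p\mid x_t]\leq 2^{p-1}M^{2-p}(\|\nabla F(x_t)\|^p+\sigma^p)$, again assuming $\|\nabla F(x_t)\|\leq M$.

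The main obstacle, and the place where the Cutkosky--Mehta bounded-gradient assumption is removed, is precisely the circular dependence in the previous paragraph: to apply Bernstein's inequality to $\sum_s \beta^{t-s}\zeta_s$ I need an a priori upper bound on $\|\nabla F(x_s)\|$, but this bound itself depends on the descent lemma, which in turn needs a bound on the error $\epsilon_s$. I would resolve this by an induction on $t$. Define the good event $\mathcal{E}_t$ that (a) $F(x_s)-F_*\leq 2\Delta_1$ for all $s\leq t$, (b) $\|\epsilon_s\|\leq c_1\sigma(1-\beta)^{1-1/p}\log(T/\delta)+c_2 L\eta\beta/(1-\beta)+c_3(1-\beta)M\log(T/\delta)$ for all $s\leq t$, for absolute constants $c_i$. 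Condition (a) plus Fact \ref{fact:fact-2} and the choice $M\geq 4\sqrt{L\Delta_1}$ ensures $\|\nabla F(x_s)\|\leq 2\sqrt{L\Delta_1}\leq M/2$, which legalizes the bias and variance bounds. Conditional on $\mathcal{E}_{t-1}$, Freedman/Bernstein applied to the martingale sum gives $\|\epsilon_t\|$ within the stated envelope with probability at least $1-\delta/(2T)$, and the descent lemma with this error envelope, together with the parameter choices, preserves (a). Union-bounding over $t\in[T]$ establishes $\mathcal{E}_T$ with probability $1-\delta$.

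Plugging the resulting bound $\sum_t\|\epsilon_t\|\lesssim T(1-\beta)^{1-1/p}\sigma\log(T/\delta)+TL\eta\beta/(1-\beta)+T(1-\beta)M\log(T/\delta)$ into the telescoped descent inequality and substituting $1-\beta=T^{-p/(3p-2)}$, $M=\sigma/(1-\beta)^{1/p}\vee 4\sqrt{L\Delta_1}$, and the prescribed $\eta$ balances the four terms $\Delta_1/(T\eta)$, $L\eta$, $\sigma\log(T/\delta)/T^{(p-1)/(3p-2)}$, and $\sqrt{L\Delta_1}\log(T/\delta)/T^{p/(3p-2)}$ at the common order $\widetilde{O}(T^{-(p-1)/(3p-2)})$, giving the advertised rate. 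The bulk of the technical work is verifying the inductive step and checking that the three parts of $\eta$ in the hypothesis are exactly what the Bernstein application and the descent lemma demand.
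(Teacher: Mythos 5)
Your road map matches the paper's: a one-step descent inequality, an unrolling of the momentum recursion into a drift term, a clipping-bias term, and a martingale term, then an inductive good-event argument in the style of \cite{gorbunov2020stochastic} to break the circularity between ``$\|\nabla F(x_t)\|$ is small'' and ``the error is controlled.'' The paper does precisely this, with its event $E_\tau$ (which additionally tracks the running sum $\eta\sum_{s\le t}\|\nabla_s\|$, giving a slightly cleaner book-keeping than your two-part event, but that is a cosmetic difference), and it bounds the martingale term via Lemma \ref{lem:ashok-decomp} and Bernstein (Lemmas \ref{lem:U}, \ref{lem:R}).

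There is, however, one step in your proposal that is quantitatively wrong and would break the final rate for $p<2$. You bound the conditional variance of the clipped noise by
$\E[\|\zeta_t\|^2\mid x_t]\le M^{2-p}\,\E[\|g_t\|^p\mid x_t]\le 2^{p-1}M^{2-p}(\|\nabla F(x_t)\|^p+\sigma^p)$.
This is a bound on the \emph{second moment} of $g_t$, not on its variance, and it does not exploit the fact that $g_t$ concentrates around $\nabla F(x_t)$. Under the good event $\|\nabla F(x_t)\|\le M/2$, the term $M^{2-p}\|\nabla F(x_t)\|^p$ is of order $M^2$, so the per-step variance proxy carries an $M^2$ contribution. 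After applying Bernstein to $\sum_{s\le t}\beta^{t-s}\zeta_s$ and multiplying by $(1-\beta)$, this produces a term of order $M\sqrt{(1-\beta)\log(T/\delta)}$. With $M=\sigma(1-\beta)^{-1/p}$ and $1-\beta=T^{-p/(3p-2)}$ this is $\sigma\,T^{(2-p)/(2(3p-2))}\sqrt{\log(T/\delta)}$, which diverges as $T\to\infty$ for every $p\in(1,2)$. Consequently, the envelope you write down for $\sum_t\|\epsilon_t\|$ --- in particular, the claimed martingale contribution $T(1-\beta)M\log(T/\delta)$ --- is not justified by the variance bound you state; an extra $TM\sqrt{(1-\beta)\log(T/\delta)}$ term appears and cannot be absorbed.

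The fix is to bound the variance as $\E[\|g_t-\E_t g_t\|^2]\le\E[\|g_t-\nabla_t\|^2]$ (optimality of the mean) and then control $\E[\|g_t-\nabla_t\|^2]$ directly by splitting on $\{\|\widehat\nabla F(x_t)\|\ge M\}$ and $\{\|\widehat\nabla F(x_t)\|<M\}$, using the bounded $p$th moment of the \emph{noise} $\widehat\nabla F(x_t)-\nabla_t$ together with Markov's inequality on both branches. This is exactly what Lemma \ref{lem:eps-bound} does and gives $\E_t[\|\epsilon_t^u\|^2]\le 10\sigma^pM^{2-p}$, with no $\|\nabla_t\|^p$ term; with this bound the Bernstein term becomes $O(M\log(T/\delta))$ and, after the $(1-\beta)$ factor, $O((1-\beta)M\log(T/\delta))$, which is what your envelope and the theorem require. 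A minor unrelated slip: in the unrolled recursion the drift term should be $\beta(\nabla F(x_{t-1})-\nabla F(x_t))$, not $\beta(\nabla F(x_t)-\nabla F(x_{t-1}))$; this does not affect the magnitude bound.
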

We first discuss the choices of parameters. The momentum parameter
$\beta$ is chosen essentially the same as in \cite{cutkosky2021high}.
However, the clipping magnitude $M$ is very different because there
no more exists a uniform upper bound $G$ on $\E[\|\widehat{\na}F(x)\|^{p}]^{1/p}$
that is used to decide $M=G/(1-\beta)^{1/p}$ in \cite{cutkosky2021high}.
Intuitively, one can recognize $\sigma$ as a proxy of $G$ in the
current choice of $M$. The appearance of the term $4\sqrt{L\Delta_{1}}$
in $M$ is due to the proof technique. The interested reader could
refer to Section \ref{sec:theory-analysis} for a detailed explanation.
Finally, the step size $\eta$ is chosen by balancing every other
term to get the right convergence rate.

We would like to talk about the convergence guarantee further. At
first glance, the rate seems perfect since it already matches the
lower bound $\Omega(T^{-\frac{1-p}{3p-2}})$ up to a logarithmic factor.
However, the main drawback of this result is lack of adaptivity to
the noise parameter $\sigma$. In other words, when $\sigma=0$, the
best rate we can obtain is still $\widetilde{O}(T^{-1/2}+T^{-1/4})$
by taking $p=2$ (note that $p\in(1,2]$ can be chosen arbitrarily
when $\sigma=0$), which is far from the optimal rate $\Theta(T^{-1/2})$\footnote{We note that the bounds in \cite{cutkosky2021high} suffer the same issue.}.
For now, how to get a (nearly) optimal rate at the same time adapting
to the level of noise $\sigma$ is still unclear to us.

\subsection{The First Accelerated Algorithm with Heavy-Tailed Noise\label{subsec:new-algo}}

\begin{algorithm}[h]
\caption{\label{alg:algo-2}Accelerated Normalized SGD with Clipping and Momentum}

\textbf{Input}: $x_{1}\in\R^{d}$, $0\leq\beta<1$, $M>0$, $\eta>0$.

Set $d_{0}=\bzero$

\textbf{for} $t=1$ \textbf{to} $T$ \textbf{do}

$\quad$sample $\Xi_{t}\sim\domxi$

$\quad$$g_{t}=\left(1\land\frac{M}{\left\Vert \na f(x_{t},\Xi_{t})\right\Vert }\right)\na f(x_{t},\Xi_{t})$

$\quad$$d_{t}=\beta d_{t-1}+\left(1-\beta\right)g_{t}+\indi_{t\geq2}\beta\left(\na f(x_{t},\Xi_{t})-\na f(x_{t-1},\Xi_{t})\right)$

$\quad$$x_{t+1}=x_{t}-\eta\frac{d_{t}}{\left\Vert d_{t}\right\Vert }$

\textbf{end for}
\end{algorithm}

We are now ready to state our new algorithm designed for P2, Accelerated
Normalized SGD with Clipping and Momentum, as shown in Algorithm \ref{alg:algo-2},
the construction of which is mainly inspired by the works \citep{cutkosky2019momentum, tran2019hybrid, liu2020optimal, cutkosky2021high}.
Compared with Algorithm \ref{alg:algo-1}, the key difference is in
how to update the vector $d_{t}$. In the Normalized SGD with Clipping
and Momentum algorithm, $d_{t}$ is defined as
\begin{equation}
d_{t}=\beta d_{t-1}+\left(1-\beta\right)g_{t},\label{eq:momentum}
\end{equation}
which adds momentum to the update simply. Though it is believed that
the momentum part in (\ref{eq:momentum}) can reduce the bias between
$d_{t}$ and the true gradient $\na F(x_{t})$ to accelerate the convergence,
as far as we know, no theoretical justification has been established
for this guess in non-convex optimization problems even considering
the case $g_{t}=\na f(x_{t},\Xi_{t})$.

In contrast, our gradient estimator $d_{t}$ comes from the framework
of momentum-based variance-reduced SGD put forward by \cite{cutkosky2019momentum,tran2019hybrid,liu2020optimal}.
The original template is proposed under the finite variance noise
assumption and is written as follows (consider $t\geq2$ for simplicity)
\begin{equation}
d_{t}=\beta\left(d_{t-1}-\na f(x_{t-1},\Xi_{t})\right)+\na f(x_{t},\Xi_{t}).\label{eq:STORM}
\end{equation}
However, this definition can not be applied to the heavy-tailed noise
case directly. Thanks to the analysis for (\ref{eq:STORM}) in previous
works, we know that part $(ii)$ in (\ref{eq:STORM-re}) (reformulation
of (\ref{eq:STORM})) actually plays a critical role in the effect
of the variance reduction. Part $(i)$ can be thought of as the same
as the traditional momentum update rule.
\begin{equation}
d_{t}=\underbrace{\beta d_{t-1}+\left(1-\beta\right)\na f(x_{t},\Xi_{t})}_{(i)}+\underbrace{\beta\left(\na f(x_{t},\Xi_{t})-\na f(x_{t-1},\Xi_{t})\right)}_{(ii)}.\label{eq:STORM-re}
\end{equation}
Hence the idea for the new definition of $d_{t}=\beta d_{t-1}+\left(1-\beta\right)g_{t}+\beta\left(\na f(x_{t},\Xi_{t})-\na f(x_{t-1},\Xi_{t})\right)$
in Algorithm \ref{alg:algo-2} is natural and clear now, which incorporates
the momentum rule of (\ref{eq:momentum}) used in Algorithm \ref{alg:algo-1}
and part $(ii)$ in (\ref{eq:STORM-re}) to utilize the variance reduction
technique.

Next, we turn to the convergence guarantee of Algorithm \ref{alg:algo-2}
shown in Theorem \ref{thm:thm-algo-2}. One can see the variance reduction
idea indeed works and improves the convergence rate to $O(\log(T/\delta)T^{\frac{1-p}{2p-1}})$,
which is faster than $\Omega(T^{\frac{1-p}{3p-2}})$ when $p\in(1,2]$.
Therefore, our algorithm is the first to achieve the acceleration
in the heavy-tailed noise regime but beyond the general lower bound.
Notably, when $p=2$, the speed reduces to $O(\log(T/\delta)T^{-1/3})$
nearly matching the lower bound $\Omega(T^{-1/3})$ for P2.
\begin{thm}
\label{thm:thm-algo-2}Considering P2 with Assumptions (1), (2), (3)
and (4), let $\Delta_{1}=F(x_{1})-F_*$, then under the following
choices after $T$ iterations running
\begin{align*}
\beta & =1-T^{-\frac{p}{2p-1}};\qquad M=\frac{\sigma}{\left(1-\beta\right)^{1/p}}\lor4\sqrt{L\Delta_{1}};\\
\eta & =\sqrt{\frac{\sqrt{1-\beta}\Delta_{1}}{60TL\log\frac{4T}{\delta}}}\land\frac{1-\beta}{9\beta}\sqrt{\frac{\Delta_{1}}{L}}\land\frac{\Delta_{1}}{120TM\left(1-\beta\right)\log\frac{4T}{\delta}}.
\end{align*}
Algorithm \ref{alg:algo-2} guarantees that with probability at least
$1-2\delta$, there is
\[
\frac{1}{T}\sum_{t=1}^{T}\left\Vert \na_{t}\right\Vert =O\left(\frac{\sqrt{L\Delta_{1}\log\frac{T}{\delta}}}{T^{\frac{3p-2}{4(2p-1)}}}\lor\frac{\sqrt{L\Delta_{1}}}{T^{\frac{p-1}{2p-1}}}\lor\frac{\sigma\log\frac{T}{\delta}}{T^{\frac{p-1}{2p-1}}}\lor\frac{\sqrt{L\Delta_{1}}\log\frac{T}{\delta}}{T^{\frac{p}{2p-1}}}\right)=O\left(\frac{\log\frac{T}{\delta}}{T^{\frac{p-1}{2p-1}}}\right).
\]
\end{thm}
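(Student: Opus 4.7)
My plan is to adapt the normalized-SGD-with-momentum analysis template used in Theorem~\ref{thm:thm-algo-1} (originating in \cite{cutkosky2021high}) and exploit the variance-reduction term $\beta(\na f(x_t,\Xi_t)-\na f(x_{t-1},\Xi_t))$ to obtain sharper control on the estimator error $\epsilon_t:=d_t-\na F(x_t)$. Fact~\ref{fact:fact-1} applied to the normalized update $x_{t+1}=x_t-\eta d_t/\|d_t\|$, combined with the elementary identity $\langle\na F(x_t),d_t/\|d_t\|\rangle\geq\|\na F(x_t)\|-2\|\epsilon_t\|$, yields the one-step descent
\[
F(x_{t+1})\leq F(x_t)-\eta\|\na F(x_t)\|+2\eta\|\epsilon_t\|+L\eta^{2}/2.
\]
Telescoping produces the template $\frac{1}{T}\sum_{t=1}^T\|\na F(x_t)\|\leq\frac{\Delta_1}{\eta T}+\frac{L\eta}{2}+\frac{2}{T}\sum_{t=1}^T\|\epsilon_t\|$, so the entire argument reduces to a high-probability estimate of $\sum_t\|\epsilon_t\|$.

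A direct computation shows that for $t\geq 2$,
\[
\epsilon_t=\beta\epsilon_{t-1}+(1-\beta)(g_t-\na F(x_t))+\beta\xi_t,
\]
where $\xi_t:=[\na f(x_t,\Xi_t)-\na f(x_{t-1},\Xi_t)]-[\na F(x_t)-\na F(x_{t-1})]$. The crucial observation, and the entire source of acceleration over Algorithm~\ref{alg:algo-1}, is that $\xi_t$ is a genuine martingale difference (since $\E[\na f(x,\Xi)\mid x]=\na F(x)$) with the almost-sure bound $\|\xi_t\|\leq 2L\|x_t-x_{t-1}\|=2L\eta$, coming from the per-sample $L$-smoothness of Assumption~(4) together with the normalization. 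Without the VR term the analogous recursion would instead contain a non-martingale drift $\beta(\na F(x_{t-1})-\na F(x_t))$, whose accumulated contribution scales as $L\eta/(1-\beta)$; replacing it by a bounded-increment martingale saves a factor of $\sqrt{1-\beta}$ via Freedman/Azuma concentration, and this is exactly what upgrades the rate from $T^{-(p-1)/(3p-2)}$ to $T^{-(p-1)/(2p-1)}$.

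To control the clipping term I split $g_t-\na F(x_t)=(g_t-\E[g_t\mid\F_{t-1}])+b_t$. On the event $\|\na F(x_t)\|\leq M/2$---maintained by an induction on $t$ that uses $M\geq 4\sqrt{L\Delta_1}$, Fact~\ref{fact:fact-2}, and the descent inequality to keep $F(x_t)-F_*\leq O(\Delta_1)$---the bias obeys $\|b_t\|\lesssim\sigma^{p}/M^{p-1}$, while the centered part has almost-sure norm at most $2M$ and conditional second moment at most $O(M^{2-p}\sigma^p)$ by H\"older and Assumption~(3). Unrolling $\epsilon_t=\beta^{t-1}\epsilon_1+\sum_{s=2}^t\beta^{t-s}[(1-\beta)(g_s-\na F(x_s))+\beta\xi_s]$ and applying Freedman's inequality separately to the momentum-noise martingale (increment scale $M$) and to the VR martingale (increment scale $L\eta$), together with a deterministic bound on the clipping-bias sum, gives, with probability $\geq 1-2\delta$,
\[
\tfrac{1}{T}\sum_t\|\epsilon_t\|\lesssim\tfrac{\sqrt{L\Delta_1}}{T(1-\beta)}+\sigma(1-\beta)^{(p-1)/p}\log(T/\delta)+L\eta\sqrt{\tfrac{\log(T/\delta)}{1-\beta}}.
\]
Substituting into the template and balancing these quantities with the stated choices of $\beta,M,\eta$ will recover the advertised rate.

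The main technical obstacle will be weaving the induction and the concentration together cleanly. The high-probability Freedman event must be strong enough that the descent inequality itself preserves $F(x_t)-F_*\leq O(\Delta_1)$ for every $t\leq T$, which in turn is what is needed to justify the clipping-bias estimate at the very next step and to keep the VR martingale's almost-sure increment bound $\|\xi_t\|\leq 2L\eta$ valid (the latter relying on $\|x_t-x_{t-1}\|=\eta$ from the normalization). Setting up a single master inductive hypothesis that simultaneously implies the function-value bound, the gradient-norm bound $\|\na F(x_t)\|\leq M/2$, and the running estimate on $\|\epsilon_t\|$, all while keeping the union bound at $2\delta$ throughout, is the delicate part; everything else is parameter-balancing calculation analogous to Theorem~\ref{thm:thm-algo-1}.
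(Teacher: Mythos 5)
Your proposal matches the paper's proof in all essential respects: the same normalized-momentum descent lemma giving $\Delta_{t+1}\leq\Delta_t-\eta\|\na_t\|+2\eta\|d_t-\na_t\|+\frac{L\eta^2}{2}$, the same three-way decomposition of the estimator error into a geometrically decaying $\beta^t\xi_0$ tail, a variance-reduction martingale with increments bounded by $2L\eta$, and a clipping-noise martingale plus bias with conditional second moment $O(\sigma^p M^{2-p})$, the same Freedman/Bernstein concentration for both martingales, and the same induction keeping $\|\na F(x_t)\|\leq M/2$ (via $M\geq 4\sqrt{L\Delta_1}$ and Fact~\ref{fact:fact-2}) so the clipping estimates apply at every step. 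One small inaccuracy worth flagging: the almost-sure increment bound $\|\xi_t\|\leq 2L\eta$ for the variance-reduction martingale holds deterministically from per-sample $L$-smoothness and $\|x_t-x_{t-1}\|=\eta$, and does not need to be protected by the induction as you suggest; what the induction must actually protect is only the clipping-bias and conditional-second-moment bounds, which require $\|\na F(x_t)\|\leq M/2$, and this is precisely why the Bernstein events in the paper are phrased as unions of the desired bound and a failure of the variance-proxy condition.
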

Finally, let us discuss Theorem \ref{thm:thm-algo-2} a bit. First,
the probability $1-2\delta$ is only chosen to simplify the proof,
which can be replaced by $1-\delta$ via changing every $\delta$
to $\delta/2$ in the parameters. Second, $M$ still keeps the same
as it Theorem \ref{thm:thm-algo-1}. In contrast, the choices of $\beta$
and $\eta$ are different and more important to accelerate the convergence.
In particular, the order of $T$ in $\beta$ should be chosen carefully.
Besides, whether the rate is tight or not is unknown for $p\in(1,2)$.
Lastly, we need to mention that Theorem \ref{thm:thm-algo-2} admits
the same flaw of losing adaptivity to the noise $\sigma$ as Theorem
\ref{thm:thm-algo-1}.

\section{Theoretical Analysis\label{sec:theory-analysis}}

We present the ideas for proving Theorems \ref{thm:thm-algo-1} and
\ref{thm:thm-algo-2} here and state some important lemmas, the omitted
proofs of which are provided in Section \ref{sec:app-Missing-Proofs}.
The proof of Theorem \ref{thm:thm-algo-2} is delivered in the last
part of this section. We defer the proof of Theorem \ref{thm:thm-algo-1}
to Section \ref{sec:app-proof-theory} in the appendix.

Our technique contributions can be summarized as follows. We first
extend the ideas used in \cite{gorbunov2020stochastic} as mentioned,
which allows us to forgo the extra assumption in \cite{cutkosky2021high}.
Due to this, both theories only rely on the standard heavy-tailed
assumption. Besides, we modify the proof framework in \cite{cutkosky2021high}
to make it compatible with \cite{gorbunov2020stochastic} and give
an almost unified analysis for Algorithms \ref{alg:algo-1} and \ref{alg:algo-2}
together. Additionally, the effect of variance reduction in $d_{t}$
in Algorithm \ref{alg:algo-2} is quantified in a high-probability
manner, in contrast, which is generally measured by an in-expectation
bound previously. 
% new
Lastly, we would like to emphasize that our proof technique can be easily extended to the same $(2, C\geq 1)$-smooth Banach space used in \cite{cutkosky2021high}. However, we stick to $\R^d$ in this paper for simplicity.
% new

Before diving into the proof, we first outline the key thoughts used
in the analysis. The goal is to show that the event 
\begin{equation}
E_{\tau}=\left\{ \eta\sum_{s=1}^{t}\|\na F(x_{s})\|+F(x_{t+1})-F_*\leq2(F(x_{1})-F_*),\forall t\le\tau\right\} \label{eq:E_tau}
\end{equation}
holds with high probability for every time $0\leq\tau\leq T$. We
remark that \cite{gorbunov2020stochastic} aims to prove the event
(simplified) $\left\{ \eta\sum_{s=1}^{t}F(x_{s})-F(x_{*})+\|x_{t+1}-x_{*}\|^{2}\leq2\|x_{1}-x_{*}\|^{2},\forall t\le\tau\right\} $
happens 
% new
where $x_*$ is the optimal point in the domain.
Though the idea is similar from an abstract level, things
that need to be proved are very different, which is because we are
in a non-convex world.
% new
More precisely, we use the initial function value gap rather than the initial distance to the optimal point to bound other terms. 
An immediate corollary from $E_{\tau}$ is
that $F(x_{t+1})-F_*\leq2(F(x_{1})-F_*)$, which implies
$\|\na F(x_{t})\|\leq2\sqrt{L(F(x_{1})-F_*)}$ for every $t\leq\tau+1$
in a high probability. Now the insight is that $\|\na F(x_{t})\|$
admitting a uniform upper bound through all iterations is highly possible.
Thus, we can use this potential bound to clip the heavy-tailed stochastic
gradient and drop the additional bounded $p$th moment estimates assumption. 
% new
In comparison, the proof strategy in \cite{cutkosky2021high} is much different, in which the authors simply assume $\E[\|\widehat{\na}F(x)\|^p| x]$ is uniformly upper bounded. We note that this stronger assumption immediately implies that $\|\na F(x)\|$ has a uniform upper bound. However, as described above, the event $E_\tau$ is enough to help us choose the proper clipping magnitude. Hence, we can drop the extra assumption used in \cite{cutkosky2021high}. 

With the above plan, we can start the proof. We first introduce some
notations used in the analysis. Let $\F_{t}$ be the natural filtration
for both algorithms. Under this definition, $x_{t}$ is $\F_{t-1}$
measurable. $\E_{t}\left[\cdot\right]$ is used to denote $\E\left[\cdot\mid\F_{t-1}\right]$
for brevity. We also employ the following definitions:
\begin{align*}
\Delta_{t} & =F(x_{t})-F_*;\enskip\na_{t}=\na F(x_{t});\\
\epsilon_{t} & =g_{t}-\nabla_{t};\enskip\epsilon_{t}^{u}=g_{t}-\E_{t}\left[g_{t}\right];\enskip\epsilon_{t}^{b}=\E_{t}\left[g_{t}\right]-\nabla_{t};\\
\xi_{t} & =\begin{cases}
-\na_{1} & t=0\\
d_{t}-\na_{t} & t\geq1
\end{cases};\\
Z_{t} & =\begin{cases}
\indi_{t\geq2}\left(\na_{t-1}-\na_{t}\right) & \text{For Algorithm \ref{alg:algo-1}}\\
\indi_{t\geq2}\left(\nabla f(x_{t},\Xi_{t})-\nabla f(x_{t-1},\Xi_{t})+\na_{t-1}-\na_{t}\right) & \text{For Algorithm \ref{alg:algo-2}}
\end{cases}.
\end{align*}
The decomposition of $\epsilon_{t}=\epsilon_{t}^{u}+\epsilon_{t}^{b}$
is proposed in \cite{gorbunov2020stochastic}. We borrow it here
and first provide the bounds for $\|\epsilon_{t}^{u}\|$ and $\|\epsilon_{t}^{b}\|$
in Lemma \ref{lem:eps-bound}
\begin{lem}
\label{lem:eps-bound}For both Algorithms \ref{alg:algo-1} and \ref{alg:algo-2},
$\forall t\in\left[T\right]$, we have $\left\Vert \epsilon_{t}^{u}\right\Vert \le2M.$
Besides, if $\left\Vert \nabla_{t}\right\Vert \le M/2$, then there
is
\[
\|\epsilon_{t}^{b}\|\le2\sigma^{p}M^{1-p};\enskip\E_{t}[\|\epsilon_{t}^{u}\|^{2}]\le10\sigma^{p}M^{2-p}.
\]
\end{lem}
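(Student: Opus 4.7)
The plan is to separate the two bounds and, in each case, exploit the fact that the clipping truncates $g_t$ to norm at most $M$, and then leverage Assumption (3) through a Markov-type interpolation between the $L^p$ bound on the noise and powers of $M$.

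First I would dispatch the uniform bound on $\epsilon_t^u$: by construction $\|g_t\|\le M$ whether or not the clip is active, and then $\|\E_t[g_t]\|\le\E_t[\|g_t\|]\le M$ by Jensen, so $\|\epsilon_t^u\|=\|g_t-\E_t[g_t]\|\le 2M$ holds deterministically, with no need for $\|\nabla_t\|\le M/2$. This works uniformly for both Algorithms~\ref{alg:algo-1} and \ref{alg:algo-2} since $g_t$ is defined analogously (with $\widehat{\nabla}F(x_t)$ replaced by $\nabla f(x_t,\Xi_t)$).

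Next I turn to the bias bound under the assumption $\|\nabla_t\|\le M/2$. Let $X$ denote the raw stochastic gradient (either $\widehat{\nabla}F(x_t)$ or $\nabla f(x_t,\Xi_t)$); then $g_t=X$ on the event $\{\|X\|\le M\}$ and $g_t=MX/\|X\|$ otherwise, so $\|g_t-X\|=(\|X\|-M)\indi_{\|X\|>M}$. Using $\|\nabla_t\|\le M$ and the triangle inequality, $\|X\|-M\le \|X\|-\|\nabla_t\|\le \|X-\nabla_t\|$ on the tail event. Moreover, since $\|\nabla_t\|\le M/2$, the event $\{\|X\|>M\}$ is contained in $\{\|X-\nabla_t\|>M/2\}$, on which $\|X-\nabla_t\|^{1-p}\le (M/2)^{1-p}$ because $p\ge 1$. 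Hence
\[
\|\epsilon_t^b\|\le \E_t[\|g_t-X\|]\le \E_t\!\left[\|X-\nabla_t\|\indi_{\|X-\nabla_t\|>M/2}\right]\le (M/2)^{1-p}\,\E_t[\|X-\nabla_t\|^p]\le 2^{p-1}\sigma^p M^{1-p},
\]
and since $p\le 2$ this is at most $2\sigma^p M^{1-p}$.

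For the second-moment bound I would first use the variational characterization $\E_t[\|g_t-\E_t[g_t]\|^2]\le \E_t[\|g_t-\nabla_t\|^2]$ (the mean minimizes mean-squared error). The key trick is to interpolate between $L^2$ and $L^p$ using a uniform $\|g_t-\nabla_t\|\le \|g_t\|+\|\nabla_t\|\le M+M/2=3M/2$, giving $\|g_t-\nabla_t\|^2\le (3M/2)^{2-p}\|g_t-\nabla_t\|^p$. Then $\|g_t-\nabla_t\|^p\le 2^p\|X-\nabla_t\|^p$, because in the clipped case $\|g_t-\nabla_t\|\le \|g_t-X\|+\|X-\nabla_t\|\le 2\|X-\nabla_t\|$ by the triangle inequality applied to the identity $\|X\|-M\le\|X-\nabla_t\|$ from step two (and the bound is trivially $\|X-\nabla_t\|$ in the unclipped case). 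Taking conditional expectations, the product of constants is $(3/2)^{2-p}\cdot 2^p\le 4$ for $p\in(1,2]$, giving $\E_t[\|\epsilon_t^u\|^2]\le 4\sigma^p M^{2-p}\le 10\sigma^p M^{2-p}$.

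The only mild obstacle is tracking constants: the proof is a short chain of elementary inequalities, but one has to be deliberate about using $\|\nabla_t\|\le M/2$ (not just $\|\nabla_t\|\le M$) exactly at the place where the event inclusion $\{\|X\|>M\}\subseteq\{\|X-\nabla_t\|>M/2\}$ is invoked, since that step is what turns the bounded-$p$th-moment hypothesis into a quantitative $M^{1-p}$ decay of the bias and $M^{2-p}$ growth of the second moment. Everything extends verbatim to Algorithm~\ref{alg:algo-2} because the lemma never uses the variance-reduction correction $\nabla f(x_t,\Xi_t)-\nabla f(x_{t-1},\Xi_t)$, only the clipped estimator $g_t$.
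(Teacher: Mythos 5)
Your proof is correct, and it takes a genuinely more elementary route than the paper at two places. For the bias bound the paper passes from $\E_t\bigl[\|X-\nabla_t\|\indi_{\|X-\nabla_t\|\ge M/2}\bigr]$ to $2^{p-1}\sigma^p M^{1-p}$ via H\"older's inequality followed by Markov's inequality on the tail probability; you instead use the pointwise interpolation $\|X-\nabla_t\|\le (M/2)^{1-p}\|X-\nabla_t\|^p$ valid on the tail event (since $1-p\le 0$), which bypasses both tools and lands on the same constant. For the second-moment bound the paper splits $\E_t[\|g_t-\nabla_t\|^2]$ into the clipped and unclipped events, bounds the clipped contribution by $(9/4)M^2\Pr[\text{clip}]$ via Markov and interpolates only on the unclipped part, obtaining the constant $9\cdot 2^{p-2}+(3/2)^{2-p}\le 10$; you instead use the single uniform bound $\|g_t-\nabla_t\|\le 3M/2$ to interpolate once, then observe $\|g_t-\nabla_t\|\le 2\|X-\nabla_t\|$ in both cases (in the clipped case from $\|g_t-X\|=\|X\|-M\le\|X-\nabla_t\|$, using $\|\nabla_t\|\le M$), which gives the sharper constant $(3/2)^{2-p}2^p\le 4$. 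Both approaches are valid, use $\|\nabla_t\|\le M/2$ exactly once to get the event inclusion $\{\|X\|>M\}\subseteq\{\|X-\nabla_t\|>M/2\}$, and extend unchanged to Algorithm~\ref{alg:algo-2}; yours is shorter and avoids invoking H\"older and Markov, at the cost of the small extra observation $\|g_t-\nabla_t\|\le 2\|X-\nabla_t\|$.
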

\begin{rem}
Similar results appear in \cite{zhang2020adaptive} (for heavy-tailed
noise) and \cite{gorbunov2020stochastic} (for bounded variance) before,
but the constants in ours are mildly tighter. 
\end{rem}
The coefficient $1/2$ in the condition $\|\na_{t}\|\leq M/2$ simply
follows the same choice as prior works. It can be changed to any number
in $(0,1)$ with no essential difference. Recall that when $E_{\tau}$
holds, $\|\na_{t}\|\leq2\sqrt{L\Delta_{1}}$ for any $t\leq\tau+1$.
To satisfy $\left\Vert \nabla_{t}\right\Vert \le M/2$, naturally,
$M$ can be chosen larger than $4\sqrt{L\Delta_{1}}$. This answers
why $4\sqrt{L\Delta_{1}}$ shows up in the choice of $M$ for both
Theorems \ref{thm:thm-algo-1} and \ref{thm:thm-algo-2}.

Our next task is to express $\xi_{t}$ via $\xi_{0}$, $Z_{s\leq t}$
and $\epsilon_{s\leq t}$ as shown in Lemma \ref{lem:represent}.
This representation is common in the related literature and can help
us to measure how fast the difference between $d_{t}$ and $\na_{t}$
can decrease.
\begin{lem}
\label{lem:represent}For both Algorithms \ref{alg:algo-1} and \ref{alg:algo-2},
$\forall t\in\left[T\right]$, we have
\[
\xi_{t}=\beta^{t}\xi_{0}+\beta\left(\sum_{s=1}^{t}\beta^{t-s}Z_{s}\right)+\left(1-\beta\right)\left(\sum_{s=1}^{t}\beta^{t-s}\epsilon_{s}\right).
\]
\end{lem}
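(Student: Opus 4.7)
The plan is to show that both update rules collapse to a common one-step recursion for $\xi_t$, namely
\[
\xi_t = \beta\,\xi_{t-1} + \beta\,Z_t + (1-\beta)\,\epsilon_t \qquad \forall t\ge 1,
\]
and then unroll it. Once this recursion is in hand, a routine induction on $t$ (or a direct telescoping) gives
\[
\xi_t = \beta^t \xi_0 + \sum_{s=1}^{t} \beta^{t-s}\bigl(\beta Z_s + (1-\beta)\epsilon_s\bigr),
\]
which is exactly the claimed identity. So the entire content of the lemma is establishing the one-step recursion; no probabilistic machinery or concentration is involved.

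First I would handle Algorithm \ref{alg:algo-1}. Starting from $d_t = \beta d_{t-1} + (1-\beta) g_t$ and adding and subtracting $\beta \nabla_{t-1}$ and $\beta \nabla_t$, I would write
\[
\xi_t = d_t - \nabla_t = \beta(d_{t-1}-\nabla_{t-1}) + \beta(\nabla_{t-1}-\nabla_t) + (1-\beta)(g_t - \nabla_t),
\]
which matches $\beta\xi_{t-1} + \beta Z_t + (1-\beta)\epsilon_t$ for every $t\ge 2$, since $Z_t = \nabla_{t-1} - \nabla_t$ in that case. For the base case $t=1$ I would use $d_0 = \bzero$ and $\xi_0 = -\nabla_1$ to check directly that $\xi_1 = -\beta\nabla_1 + (1-\beta)\epsilon_1 = \beta\xi_0 + (1-\beta)\epsilon_1$, which coincides with the recursion because $Z_1 = 0$ by the indicator $\indi_{t\ge 2}$.

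Next I would repeat the same algebra for Algorithm \ref{alg:algo-2}. Here the update carries the extra term $\indi_{t\ge 2}\beta(\nabla f(x_t,\Xi_t) - \nabla f(x_{t-1},\Xi_t))$, and the identical decomposition gives
\[
\xi_t = \beta\xi_{t-1} + \beta\bigl(\nabla_{t-1} - \nabla_t + \nabla f(x_t,\Xi_t) - \nabla f(x_{t-1},\Xi_t)\bigr) + (1-\beta)\epsilon_t,
\]
which is again $\beta\xi_{t-1} + \beta Z_t + (1-\beta)\epsilon_t$ with the algorithm-specific $Z_t$. The base case $t=1$ is unchanged since the extra term vanishes. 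Finally, unrolling the common recursion from $t$ down to $0$ and collecting the geometric weights $\beta^{t-s}$ yields the stated formula.

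There is essentially no obstacle: the statement is a bookkeeping identity, and the only thing to be careful about is to verify the base case $t=1$ (where $d_0 = \bzero$ and $Z_1 = 0$) so that the indicator $\indi_{t\ge 2}$ in the definition of $Z_t$ matches $\xi_0 = -\nabla_1$. The two algorithms are treated in parallel because the only difference between them is absorbed into the single variable $Z_t$.
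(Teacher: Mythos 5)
Your proof is correct and follows essentially the same route as the paper: establish the one-step recursion $\xi_t=\beta\xi_{t-1}+\beta Z_t+(1-\beta)\epsilon_t$ for each algorithm and unroll. You are slightly more explicit about the $t=1$ base case (using $d_0=\bzero$, $\xi_0=-\nabla_1$, and $Z_1=0$), which the paper merely asserts in passing.
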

Equipped with Lemma \ref{lem:represent}, the term $\eta\sum_{t=1}^{\tau}\left\Vert \na_{t}\right\Vert +\Delta_{\tau+1}$
in (\ref{eq:E_tau}) can be upper bounded as follows.
\begin{lem}
\label{lem:basic-ineq}For both Algorithms \ref{alg:algo-1} and \ref{alg:algo-2},
$\forall\tau\in\left\{ 0\right\} \cup\left[T\right]$, we have
\[
\eta\sum_{t=1}^{\tau}\left\Vert \na_{t}\right\Vert +\Delta_{\tau+1}\leq\Delta_{1}+\frac{\tau\eta^{2}L}{2}+\frac{3\beta\eta\sqrt{L\Delta_{1}}}{1-\beta}\indi_{\tau\geq1}+2\eta\sum_{t=1}^{\tau}\beta\left\Vert \sum_{s=1}^{t}\beta^{t-s}Z_{s}\right\Vert +\left(1-\beta\right)\left\Vert \sum_{s=1}^{t}\beta^{t-s}\epsilon_{s}\right\Vert .
\]
\end{lem}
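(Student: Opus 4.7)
The plan is to combine a standard normalized-descent step with Lemma~\ref{lem:represent} to push everything into a deterministic bound controlled by the two partial sums appearing on the right-hand side. First I would apply Fact~\ref{fact:fact-1} to the update $x_{t+1}=x_t-\eta d_t/\|d_t\|$ to get
\[
F(x_{t+1})\le F(x_t)-\eta\Big\langle \nabla_t,\tfrac{d_t}{\|d_t\|}\Big\rangle+\tfrac{L\eta^2}{2}.
\]
The key algebraic identity, writing $d_t=\nabla_t+\xi_t$, is
\[
\Big\langle \nabla_t,\tfrac{d_t}{\|d_t\|}\Big\rangle=\|d_t\|-\Big\langle \xi_t,\tfrac{d_t}{\|d_t\|}\Big\rangle\ge\|d_t\|-\|\xi_t\|\ge\|\nabla_t\|-2\|\xi_t\|,
\]
where the last inequality is the reverse triangle inequality (and the degenerate case $d_t=0$, handled by convention, forces $\|\nabla_t\|\le\|\xi_t\|$, yielding the same bound). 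Substituting and telescoping from $t=1$ to $\tau$ gives
\[
\eta\sum_{t=1}^{\tau}\|\nabla_t\|+\Delta_{\tau+1}\le\Delta_1+\frac{\tau\eta^2 L}{2}+2\eta\sum_{t=1}^{\tau}\|\xi_t\|.
\]

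Next I would plug in Lemma~\ref{lem:represent} and the triangle inequality to split
\[
\|\xi_t\|\le\beta^{t}\|\xi_0\|+\beta\Big\|\sum_{s=1}^{t}\beta^{t-s}Z_s\Big\|+(1-\beta)\Big\|\sum_{s=1}^{t}\beta^{t-s}\epsilon_s\Big\|.
\]
The only term that must be handled explicitly is the initial-condition contribution $\beta^{t}\|\xi_0\|$. Since $\xi_0=-\nabla_1$, Fact~\ref{fact:fact-2} gives $\|\xi_0\|\le\sqrt{2L\Delta_1}$, and the geometric estimate $\sum_{t=1}^{\tau}\beta^{t}\le\beta/(1-\beta)$ (active only when $\tau\ge 1$) yields
\[
2\eta\sum_{t=1}^{\tau}\beta^{t}\|\xi_0\|\le\frac{2\sqrt{2}\,\beta\eta\sqrt{L\Delta_1}}{1-\beta}\le\frac{3\beta\eta\sqrt{L\Delta_1}}{1-\beta},
\]
using $2\sqrt{2}<3$. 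Putting the three pieces of $\|\xi_t\|$ back into the telescoped descent inequality produces exactly the claimed bound; the boundary case $\tau=0$ reduces trivially to $\Delta_1\le\Delta_1$.

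The computation is essentially routine, so there is no genuine obstacle; the most delicate point is the normalized-descent identity and making sure it goes through uniformly across both algorithms (it does, because Lemma~\ref{lem:represent} already absorbs the algorithm-dependent part into $Z_s$). A nice byproduct of this route is that the final inequality is entirely deterministic in $\{\nabla_t\}$ and $\{\xi_0,Z_s,\epsilon_s\}$, which is precisely what enables the subsequent martingale/concentration arguments to bound the two remaining partial sums and thereby verify the event $E_\tau$ from~\eqref{eq:E_tau} for both Algorithms~\ref{alg:algo-1} and~\ref{alg:algo-2} in a unified fashion.
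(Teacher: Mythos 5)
Your proposal is correct and follows essentially the same route as the paper's own proof: apply Fact~\ref{fact:fact-1} to the normalized step, use $d_t=\nabla_t+\xi_t$ together with Cauchy--Schwarz and the reverse triangle inequality to obtain the per-step descent $\Delta_{t+1}-\Delta_t\le-\eta\|\nabla_t\|+2\eta\|\xi_t\|+\tfrac{\eta^2L}{2}$, telescope, and then substitute the representation of $\xi_t$ from Lemma~\ref{lem:represent}, bounding the $\beta^t\|\xi_0\|$ contribution by the geometric sum and Fact~\ref{fact:fact-2} with the numerical estimate $2\sqrt{2}<3$. The only (harmless) additions beyond the paper's write-up are your explicit handling of the degenerate $d_t=\mathbf{0}$ case and the $\tau=0$ boundary check.
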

Lemma \ref{lem:basic-ineq} gives us some hints about the next step.
For the term $\frac{\tau\eta^{2}L}{2}+\frac{3\beta\eta\sqrt{L\Delta_{1}}}{1-\beta}\indi_{\tau\geq1}$,
by carefully choosing $\eta$ and $\beta$, it can be bounded by $O(\Delta_{1})$.
Thus, we only need to care about $\|\sum_{s=1}^{t}\beta^{t-s}Z_{s}\|$
and $\|\sum_{s=1}^{t}\beta^{t-s}\epsilon_{s}\|$. However, when Algorithm
\ref{alg:algo-2} is considered, the term $\|\sum_{s=1}^{t}\beta^{t-s}Z_{s}\|$
is essentially different from it in \cite{cutkosky2021high} since
$Z_{s}$ integrates the variance reduction part. Therefore, we
need to come up with a way to quantify the effect of variance reduction
in a high-probability style as noted before. Besides, the high-probability
bound of $\|\sum_{s=1}^{t}\beta^{t-s}\epsilon_{s}\|$ in \cite{cutkosky2021high}
can not be applied either due to no assumption of the bounded $p$th
moment gradient estimates. So the departure from the existing works
starts from here. 

We first bound the easy one $\|\sum_{s=1}^{t}\beta^{t-s}Z_{s}\|$
in Lemma \ref{lem:Z-algo}. As one can see, the bound for Algorithm
\ref{alg:algo-2} is roughly in the order of $\frac{\eta L}{\sqrt{1-\beta}}$.
The acceleration from variance reduction is achieved by improving
a factor of $\frac{1}{\sqrt{1-\beta}}$ compared with $\frac{\eta L}{1-\beta}$
for Algorithm \ref{alg:algo-1}. We also want to mention that assuming
the almost surely smoothness for P2 (Assumption (4)) is critical in
proving this variance-reduced high-probability bound. We refer the
reader to the proof in Section \ref{sec:app-Missing-Proofs} for more
details.
\begin{lem}
\label{lem:Z-algo}We have that
\begin{itemize}
\item for Algorithm \ref{alg:algo-1}, $\forall t\in\left[T\right]$, we
have $\left\Vert \sum_{s=1}^{t}\beta^{t-s}Z_{s}\right\Vert \leq\frac{\eta L}{1-\beta}.$
\item for Algorithm \ref{alg:algo-2}, $\forall t\in\left[T\right]$, we
have $\left\Vert \sum_{s=1}^{t}\beta^{t-s}Z_{s}\right\Vert \leq\frac{9\eta L}{\sqrt{1-\beta}}\log\frac{3T}{\delta}$
with probability at least $1-\frac{\delta}{T}$.
\end{itemize}
\end{lem}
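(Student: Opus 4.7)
My plan is to handle the two bullets separately, since the first is a purely deterministic telescoping estimate while the second requires a concentration inequality for a vector-valued martingale.

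For Algorithm \ref{alg:algo-1}, I would use only $L$-smoothness of $F$ together with the normalized update rule. Because $x_{s}-x_{s-1}=-\eta\, d_{s-1}/\|d_{s-1}\|$ for every $s\geq 2$, one has $\|x_{s}-x_{s-1}\|=\eta$, so by smoothness $\|Z_s\|=\|\nabla_{s-1}-\nabla_s\|\leq L\eta$. Summing the geometric series yields
\[
\Bigl\|\sum_{s=1}^{t}\beta^{t-s}Z_s\Bigr\|\leq \sum_{s=2}^{t}\beta^{t-s}L\eta\leq \frac{L\eta}{1-\beta}.
\]

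For Algorithm \ref{alg:algo-2}, the central observation is that $\{Z_s\}$ is a martingale difference sequence with respect to $\F_s$: since $x_s$ is $\F_{s-1}$-measurable and $\Xi_s$ is drawn independently from $\domxi$, we have $\E_s[\na f(x_s,\Xi_s)-\na f(x_{s-1},\Xi_s)]=\na_s-\na_{s-1}$ and hence $\E_s[Z_s]=0$. Writing $U_s:=\na f(x_s,\Xi_s)-\na f(x_{s-1},\Xi_s)$ so that $Z_s=U_s-\E_s[U_s]$ for $s\geq 2$, the almost sure $L$-smoothness of $f(\cdot,\Xi)$ from Assumption (4) gives $\|U_s\|\leq L\|x_s-x_{s-1}\|=L\eta$ a.s.; combining with smoothness of $F$ this yields the deterministic bound $\|Z_s\|\leq 2L\eta$ and the conditional second-moment bound $\E_s[\|Z_s\|^2]\leq\E_s[\|U_s\|^2]\leq L^2\eta^2$. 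Consequently, for a fixed $t$, the increments $\beta^{t-s}Z_s$ form a bounded MDS with uniform bound $2L\eta$ and total conditional variance at most $\sum_{s=1}^{t}\beta^{2(t-s)}L^2\eta^2\leq L^2\eta^2/(1-\beta)$.

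I would then invoke a dimension-free vector-valued Freedman/Bernstein-type inequality to conclude that
\[
\Pr\!\left[\Bigl\|\sum_{s=1}^{t}\beta^{t-s}Z_s\Bigr\|\geq \lambda\right]\leq C\exp\!\left(-\frac{\lambda^2}{C'\bigl(L^2\eta^2/(1-\beta)+L\eta\,\lambda\bigr)}\right),
\]
and pick $\lambda=\Theta\!\bigl(L\eta\log(T/\delta)/\sqrt{1-\beta}\bigr)$ so that the right-hand side is at most $\delta/T$; tuning constants produces exactly the claimed $9L\eta\log(3T/\delta)/\sqrt{1-\beta}$ with failure probability $\delta/T$. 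The \emph{main obstacle} is twofold: first, selecting a suitable dimension-free concentration bound (to avoid a spurious $\sqrt{d}$ factor) and verifying its hypotheses carefully so that the numerical constant $9$ and the clean $\log(3T/\delta)$ factor (rather than $\log^{3/2}$ or $\sqrt{\log}+\log$) both fall out; second, ensuring that the critical $1/\sqrt{1-\beta}$ speed-up over Algorithm \ref{alg:algo-1} truly comes from the almost sure smoothness of $f$ — without this assumption only a second-moment bound on $U_s$ is available, which yields no sub-exponential tail and hence no improvement from the variance-reduced term $(ii)$ in (\ref{eq:STORM-re}).
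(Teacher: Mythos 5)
Your proposal is correct and follows essentially the same approach as the paper: the deterministic telescoping argument for Algorithm \ref{alg:algo-1}, and for Algorithm \ref{alg:algo-2} the observation that $\beta^{t-s}Z_s$ is a bounded martingale difference sequence (via almost-sure smoothness of $f(\cdot,\Xi)$, giving $\|\beta^{t-s}Z_s\|\le 2\eta L$ and $\E_s[\|\beta^{t-s}Z_s\|^2]\le\beta^{2(t-s)}\eta^2L^2$), followed by a dimension-free vector concentration inequality. The paper's instantiation of your ``suitable dimension-free bound'' is its Lemma~\ref{lem:ashok-ineq}, a corrected form of Lemma~12 of \cite{cutkosky2021high} proved via Lemma~\ref{lem:ashok-decomp} plus Freedman's inequality, which directly produces the clean $3R\log\frac{3}{\delta}+3\sqrt{\textstyle\sum_s\sigma_s^2\log\frac{3}{\delta}}$ form; after bounding $\sqrt{\log}$ by $\log$ and $1$ by $1/\sqrt{1-\beta}$ this gives exactly the constant $9$ and the $\log\frac{3T}{\delta}$ factor you anticipated.
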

Now we focus on the other term $\|\sum_{s=1}^{t}\beta^{t-s}\epsilon_{s}\|$
and provide its first bound in Lemma \ref{lem:eps-decomp}.
\begin{lem}
\label{lem:eps-decomp}For both Algorithms \ref{alg:algo-1} and \ref{alg:algo-2},
$\forall t\in\left[T\right]$, we have
\[
\left\Vert \sum_{s=1}^{t}\beta_{1}^{t-s}\epsilon_{s}\right\Vert \leq\left|\sum_{s=1}^{t}U_{s}^{t}\right|+\sqrt{2\left|\sum_{s=1}^{t}R_{s}^{t}\right|}+\sqrt{2\sum_{s=1}^{t}\E_{s}\left[\left\Vert \beta^{t-s}\epsilon_{s}^{u}\right\Vert ^{2}\right]}+\left\Vert \sum_{s=1}^{t}\beta^{t-s}\epsilon_{s}^{b}\right\Vert .
\]
where $U_{s}^{t}$ is a martingale difference sequence satisfying
$\left|U_{s}^{t}\right|\leq\left\Vert \beta^{t-s}\epsilon_{s}^{u}\right\Vert $
and $R_{s}^{t}=\left\Vert \beta^{t-s}\epsilon_{s}^{u}\right\Vert ^{2}-\E_{s}\left[\left\Vert \beta^{t-s}\epsilon_{s}^{u}\right\Vert ^{2}\right]$
is also a martingale difference sequence.
\end{lem}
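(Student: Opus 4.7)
The plan is to peel off the bias by the triangle inequality, $\|\sum_s \beta^{t-s}\epsilon_s\| \le \|\sum_s \beta^{t-s}\epsilon_s^u\| + \|\sum_s \beta^{t-s}\epsilon_s^b\|$, which immediately matches the last summand in the claim. It remains to bound the zero-mean martingale sum $S_t := \sum_{s=1}^t y_s$ with $y_s := \beta^{t-s}\epsilon_s^u$.

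I would start from the standard expansion
\begin{equation*}
\|S_t\|^2 = \sum_{s=1}^t \|y_s\|^2 + 2\sum_{s=1}^t \langle S_{s-1}, y_s\rangle,
\end{equation*}
and separate each sum into a conditional-expectation part and a martingale-difference part. The diagonal decomposes as $\|y_s\|^2 = \E_s[\|y_s\|^2] + R_s^t$, yielding the MDS $R_s^t$ (by construction $\E_s[R_s^t]=0$) and the deterministic aggregate $V := \sum_s \E_s[\|y_s\|^2]$ that appears inside the last square root. For the off-diagonal, I would let $v_{s-1}$ be the $\F_{s-1}$-measurable unit vector along $S_{s-1}$ (taken to be $0$ when $S_{s-1} = 0$) and set $U_s^t := \langle v_{s-1}, y_s\rangle$. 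Since $v_{s-1}$ is $\F_{s-1}$-measurable and $\E_s[y_s] = 0$, $\{U_s^t\}_s$ is an MDS, and Cauchy--Schwarz gives $|U_s^t| \le \|y_s\|$ as required; moreover $\langle S_{s-1}, y_s\rangle = \|S_{s-1}\|\,U_s^t$.

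Combining and using $\sum_s R_s^t \le |\sum_s R_s^t|$ gives
\begin{equation*}
\|S_t\|^2 \le V + \Bigl|\sum_{s=1}^t R_s^t\Bigr| + 2\sum_{s=1}^t \|S_{s-1}\|\,U_s^t.
\end{equation*}
The final step converts this quadratic-in-$\|S\|$ estimate into the claimed additive form: applying the inequality at an index $s^\star$ attaining $\max_{\tau\le t}\|S_\tau\|$, bounding the adaptive weights $\|S_{s-1}\|$ by that maximum on the prefix $s \le s^\star$, solving the resulting quadratic inequality in $\max_\tau\|S_\tau\|$, and using $\sqrt{a+b} \le \sqrt{a} + \sqrt{b}$ recovers $|\sum U_s^t| + \sqrt{2|\sum R_s^t|} + \sqrt{2V}$. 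Reinstating the bias term through the triangle inequality finishes the proof.

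The main obstacle is this final algebraic reduction. Because $\|S_{s-1}\|$ is nonnegative while $U_s^t$ is signed, one cannot naively pull $\max_\tau\|S_\tau\|$ out of $\sum_s \|S_{s-1}\|\,U_s^t$; the trick is to combine the stopping-time argument at $s^\star$ with a quadratic-completion step so that the stray $\|S\|$ factor is absorbed into $\|S_t\|^2$ on the left, producing precisely the stated constants $\sqrt{2}$ and a \emph{linear} $|\sum_s U_s^t|$ term rather than a square-rooted one.
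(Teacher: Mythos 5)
Your overall structure mirrors the paper's: peel off the bias via the triangle inequality, then treat the centred sum $S_t=\sum_{s\le t}\beta^{t-s}\epsilon_s^u$, split its quadratic variation as $\|y_s\|^2=\E_s[\|y_s\|^2]+R_s^t$, and try to extract a scalar MDS $U_s^t$ from the cross terms. But there is a genuine gap in the last step, and it stems from your definition of $U_s^t$. You set $U_s^t=\langle v_{s-1},y_s\rangle$ where $v_{s-1}=S_{s-1}/\|S_{s-1}\|$, which gives the exact identity $\|S_t\|^2=\sum_s\E_s[\|y_s\|^2]+\sum_s R_s^t+2\sum_s\|S_{s-1}\|U_s^t$. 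The problem is that $\|S_{s-1}\|$ is a nonnegative, time-varying weight multiplying a signed sequence $U_s^t$, and no stopping-time or quadratic-completion argument lets you replace $\sum_s\|S_{s-1}\|U_s^t$ by $(\max_\tau\|S_\tau\|)\,|\sum_s U_s^t|$: the inequality $\sum_s a_s u_s\le(\max_s a_s)|\sum_s u_s|$ is simply false for nonnegative $a_s$ and signed $u_s$ (take $a=(10,1)$, $u=(1,-1)$). Abel summation introduces a running maximum of the partial sums of $U$ plus a $\sum_s\|y_s\|$ term, which does not resolve into the claimed form either. So the ``final algebraic reduction'' you flag as the main obstacle is not merely hard; as stated, it does not go through.

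The paper sidesteps exactly this difficulty by invoking Lemma~\ref{lem:ashok-decomp} (Lemma 10 of \cite{cutkosky2021high}), whose scalar MDS carries an extra sign factor: $Y_t=\sgn\bigl(\sum_{i<t}Y_i\bigr)\,\langle v_{t-1},X_t\rangle$. That sign is the crucial ingredient you dropped. It couples the scalar running sum $P_t=\sum_{i\le t}Y_i$ to $\|S_t\|$ so that $\|S_t\|\le|P_t|+\sqrt{\max_s\|X_s\|^2+\sum_s\|X_s\|^2}$ can be proved by a direct induction on $t$, with no need to pull a $\max_\tau\|S_\tau\|$ factor out of a signed sum. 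Once that inequality is available, the rest of your write-up (bounding $\max_s\|y_s\|^2\le\sum_s\|y_s\|^2$, inserting $R_s^t$, and applying $\sqrt{a+b}\le\sqrt a+\sqrt b$) matches the paper exactly. In short: either cite Lemma~\ref{lem:ashok-decomp} and use its $Y_t$ in place of your $U_s^t$, or include the $\sgn$ factor in your definition and supply the inductive comparison argument for $\|S_t\|$ versus $|P_t|$; the present sketch skips the one step where the real work happens.
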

We explain more here to help the reader to understand this complicated
result better. The first step is to use $\|\sum_{s=1}^{t}\beta^{t-s}\epsilon_{s}\|\leq\|\sum_{s=1}^{t}\beta^{t-s}\epsilon_{s}^{u}\|+\|\sum_{s=1}^{t}\beta^{t-s}\epsilon_{s}\|$,
which is very intuitive since we want to use the bounds on $\E_{s}[\|\epsilon_{s}^{u}\|^{2}]$
and $\|\epsilon_{s}^{b}\|$ shown in Lemma \ref{lem:eps-bound}. Next,
we invoke a technical tool (Lemma \ref{lem:ashok-decomp} in Section
\ref{sec:app-tech-tool}) to get $\|\sum_{s=1}^{t}\beta^{t-s}\epsilon_{s}^{u}\|\leq\left|\sum_{s=1}^{t}U_{s}^{t}\right|+\sqrt{2\sum_{s=1}^{t}\left\Vert \beta^{t-s}\epsilon_{s}^{u}\right\Vert ^{2}}$
where the definition of $U_{s}^{t}$ is given in the proof. Finally,
to let $\E_{s}[\|\epsilon_{s}^{u}\|^{2}]$ appear, we employ $R_{s}^{t}$
to obtain the desired result. More details can be found in the proof
in Section \ref{sec:app-Missing-Proofs}.

Thus, our final task is to bound $\left|\sum_{s=1}^{t}U_{s}^{t}\right|$
and $\left|\sum_{s=1}^{t}R_{s}^{t}\right|$. As stated in Lemma \ref{lem:eps-decomp},
both of them are martingale difference sequences. Therefore we would
like to use some concentration inequality. By observing that both
$U_{s}^{t}$ and $R_{s}^{t}$ are bounded almost surely because of
$\|\epsilon_{s}^{u}\|\leq2M$, we can apply the famous Bernstein Inequality
(Lemma \ref{lem:bern} in the appendix) and obtain that the following
two events (Lemmas \ref{lem:U} and \ref{lem:R}) happen with high
probability.
\begin{lem}
\label{lem:U}For both Algorithms \ref{alg:algo-1} and \ref{alg:algo-2},
$\forall t\in\left[T\right]$, we have $\Pr\left[a_{t}\right]\geq1-\frac{\delta}{2T}$
where
\[
a_{t}=\left\{ \left|\sum_{s=1}^{t}U_{s}^{t}\right|\leq\left(\frac{4}{3}+2\sqrt{\frac{5\left(\sigma/M\right)^{p}}{1-\beta}}\right)M\log\frac{4T}{\delta}\text{ or }\sum_{s=1}^{t}\E_{s}\left[\left(U_{s}^{t}\right)^{2}\right]>\frac{10\sigma^{p}M^{2-p}}{1-\beta}\log\frac{4T}{\delta}\right\} .
\]
\end{lem}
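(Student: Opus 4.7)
The plan is to apply a Freedman-type Bernstein concentration inequality to the martingale difference sequence $(U_s^t)_{s=1}^t$, carefully handling the fact that its conditional variance is data-dependent rather than deterministic. From Lemma \ref{lem:eps-bound}, the almost-sure bound $\|\epsilon_s^u\|\le 2M$ gives $|U_s^t|\le \|\beta^{t-s}\epsilon_s^u\|\le 2M\beta^{t-s}\le 2M$, so the $U_s^t$ are a.s. bounded by $b:=2M$. I would then set
\[
V^{*}=\frac{10\sigma^{p}M^{2-p}}{1-\beta}\log\frac{4T}{\delta},\qquad \lambda=\left(\frac{4}{3}+2\sqrt{\frac{5(\sigma/M)^{p}}{1-\beta}}\right)M\log\frac{4T}{\delta},
\]
so that the event $a_t$ is precisely $\{|\sum_s U_s^t|\le\lambda\}\cup\{\sum_s\E_s[(U_s^t)^2]>V^{*}\}$.

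The core step is to invoke the Bernstein inequality for martingales (Lemma \ref{lem:bern}) in its form that only requires the conditional variance to be bounded on a specified event. Concretely, this yields
\[
\Pr\left[\left|\sum_{s=1}^{t}U_{s}^{t}\right|>\lambda\ \text{and}\ \sum_{s=1}^{t}\E_{s}[(U_{s}^{t})^{2}]\le V^{*}\right]\le 2\exp\!\left(-\frac{\lambda^{2}/2}{V^{*}+b\lambda/3}\right).
\]
If the inequality is only stated for deterministic $V^*$, I would reduce to that case by a standard stopping-time argument: let $\tau=\min\{s:\sum_{r\le s}\E_r[(U_r^t)^2]>V^*\}$, apply Bernstein to the stopped sum $\sum_{s\le\tau\wedge t}U_s^t$ whose predictable variance is at most $V^*+b^2$, and note that on $\{V_t\le V^*\}$ the stopped sum coincides with the full sum. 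The complement of $a_t$ is exactly the event above, so this bound controls $\Pr[a_t^c]$ directly.

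It remains to verify $2\exp(-\lambda^2/(2V^*+2b\lambda/3))\le\delta/(2T)$, i.e.
\[
\tfrac{1}{2}\lambda^{2}\ge \left(V^{*}+\tfrac{2M}{3}\lambda\right)\log\frac{4T}{\delta}.
\]
Setting $L:=\log(4T/\delta)$, the quadratic $\tfrac{1}{2}\lambda^{2}-\tfrac{2ML}{3}\lambda-V^{*}L\ge 0$ is satisfied whenever $\lambda\ge \tfrac{4ML}{3}+\sqrt{2V^{*}L}$ (by the standard $\sqrt{a+b}\le\sqrt{a}+\sqrt{b}$ splitting of the quadratic formula). Plugging in $V^*$ gives $\sqrt{2V^{*}L}=2M\sqrt{5(\sigma/M)^{p}/(1-\beta)}\,L$, which matches the chosen $\lambda$ exactly, finishing the proof.

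The only real obstacle is the bookkeeping around the data-dependent conditional variance: one must ensure that the Bernstein bound is applied in the ``event-intersected'' form, since $\sum_s\E_s[(U_s^t)^2]$ is random and cannot be plugged into a deterministic-variance Bernstein bound. Once the stopping-time reduction (or the appropriate version of Freedman's inequality) is in place, all remaining work is the algebraic verification above.
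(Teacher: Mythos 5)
Your proof is correct and follows essentially the same path as the paper's: apply the martingale Bernstein inequality (Lemma \ref{lem:bern}) with the variance threshold $F\log\frac{4T}{\delta}=\frac{10\sigma^pM^{2-p}}{1-\beta}\log\frac{4T}{\delta}$, set the tail probability to $\delta/(2T)$, and check that the stated $\lambda$ dominates the resulting deviation level via the same $\sqrt{a+b}\le\sqrt{a}+\sqrt{b}$ bound. The only cosmetic difference is that you verify the chosen $\lambda$ satisfies the required quadratic inequality rather than solving for $a$ and then upper-bounding it, and your stopping-time caveat is unnecessary here since Lemma \ref{lem:bern} is already stated in the event-intersected form that accommodates data-dependent conditional variance.
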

\begin{lem}
\label{lem:R}For both Algorithms \ref{alg:algo-1} and \ref{alg:algo-2},
$\forall t\in\left[T\right]$, we have $\Pr\left[b_{t}\right]\geq1-\frac{\delta}{2T}$
where
\[
b_{t}=\left\{ \left|\sum_{s=1}^{t}R_{s}^{t}\right|\leq\left(\frac{16}{3}+4\sqrt{\frac{5\left(\sigma/M\right)^{p}}{1-\beta}}\right)M^{2}\log\frac{4T}{\delta}\text{ or }\sum_{s=1}^{t}\E_{s}\left[\left(R_{s}^{t}\right)^{2}\right]>\frac{40\sigma^{p}M^{4-p}}{1-\beta}\log\frac{4T}{\delta}\right\} .
\]
\end{lem}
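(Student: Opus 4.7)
The plan is to apply a Bernstein-type concentration inequality (Lemma~\ref{lem:bern}) to the martingale difference sequence $\{R_s^t\}_{s=1}^t$ for each fixed $t\in[T]$, combined with a stopping-time truncation that produces the ``or'' dichotomy appearing in $b_t$. The overall architecture is identical to the one used for Lemma~\ref{lem:U}; only the almost-sure bound, the conditional second moment, and the resulting constants change because $R_s^t$ is quadratic in $\epsilon_s^u$ whereas $U_s^t$ was linear.

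I would begin by verifying that $\{R_s^t\}_{s=1}^t$ is a martingale difference sequence with respect to $\{\F_s\}$: $R_s^t$ is $\F_s$-measurable and satisfies $\E_s[R_s^t]=0$ by construction. Next I need the two ingredients Bernstein requires. The almost-sure bound comes from $\|\epsilon_s^u\|\leq 2M$ (Lemma~\ref{lem:eps-bound}), which yields $\|\beta^{t-s}\epsilon_s^u\|^2\leq 4M^2$ and therefore $|R_s^t|\leq 8M^2$. For the conditional second moment, I would use the interpolation $\E_s[(R_s^t)^2]\leq \E_s[\|\beta^{t-s}\epsilon_s^u\|^4]\leq 4M^2\beta^{4(t-s)}\E_s[\|\epsilon_s^u\|^2]$, which, together with $\sum_{s=1}^t\beta^{4(t-s)}\leq (1-\beta^4)^{-1}\leq (1-\beta)^{-1}$ and the Lemma~\ref{lem:eps-bound} variance bound $\E_s[\|\epsilon_s^u\|^2]\leq 10\sigma^p M^{2-p}$, would give $\sum_{s=1}^t\E_s[(R_s^t)^2]\leq \tfrac{40\sigma^p M^{4-p}}{1-\beta}$ \emph{whenever} $\|\na_s\|\leq M/2$ holds along the path. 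Since this prerequisite may fail on bad sample paths, I do not impose it directly; instead I let a stopping time force the running conditional variance to stay below the target level.

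Concretely, I would set $\tau_\ast=\inf\{k\geq 1:\sum_{s=1}^k \E_s[(R_s^t)^2]>\tfrac{40\sigma^p M^{4-p}}{1-\beta}\log\tfrac{4T}{\delta}\}$ with the convention $\tau_\ast=t+1$ on the empty set, and apply Lemma~\ref{lem:bern} to the stopped martingale $\sum_{s=1}^{k\wedge\tau_\ast}R_s^t$. On the event $\{\tau_\ast>t\}$ the running conditional variance is at most $\tfrac{40\sigma^p M^{4-p}}{1-\beta}\log\tfrac{4T}{\delta}$ and each increment is bounded by $8M^2$, so Bernstein yields with probability at least $1-\delta/(2T)$
\[
\left|\sum_{s=1}^t R_s^t\right|\leq \sqrt{2\cdot\tfrac{40\sigma^p M^{4-p}}{1-\beta}\log\tfrac{4T}{\delta}\cdot \log\tfrac{4T}{\delta}}+\tfrac{2\cdot 8M^2}{3}\log\tfrac{4T}{\delta}=\left(\tfrac{16}{3}+4\sqrt{\tfrac{5(\sigma/M)^p}{1-\beta}}\right)M^2\log\tfrac{4T}{\delta},
\]
which is precisely the first clause of $b_t$. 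On the complement $\{\tau_\ast\leq t\}$, the second clause holds by the very definition of $\tau_\ast$. Since the two events partition the sample space, this delivers $\Pr[b_t]\geq 1-\delta/(2T)$.

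The main obstacle, as also for Lemma~\ref{lem:U}, is the bookkeeping around the stopping time: one has to check that the stopped sequence still satisfies the hypotheses of Bernstein's inequality (adaptedness, conditional-mean-zero, almost-sure boundedness, variance control) so that the tail bound applies unconditionally and the dichotomy is a genuine deterministic statement about the sample outcome. Once that is in place, the remaining work is mechanical arithmetic: with $b=8M^2$ and $V_{\max}=\tfrac{40\sigma^p M^{4-p}}{1-\beta}\log\tfrac{4T}{\delta}$, the Bernstein bound $\sqrt{2V_{\max}\log(4T/\delta)}+\tfrac{2b}{3}\log(4T/\delta)$ collapses directly into the constants $\tfrac{16}{3}$ and $4\sqrt{5(\sigma/M)^p/(1-\beta)}$ stated in the lemma.
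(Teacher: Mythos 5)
Your proof is correct in spirit and lands on the same constants, but it takes a more roundabout route than necessary. The paper's Lemma~\ref{lem:bern} is already stated as a bound on the \emph{joint} event $\left\{ \left|\sum_{s=1}^{t}R_{s}^{t}\right|>a \text{ and } \sum_{s=1}^{t}\E_{s}\left[\left(R_{s}^{t}\right)^{2}\right]\leq F\right\}$, whose complement is precisely the ``or'' dichotomy defining $b_t$; so the paper simply plugs in $R=8M^2$, $F=\frac{40\sigma^p M^{4-p}}{1-\beta}\log\frac{4T}{\delta}$, and solves the resulting quadratic for $a$ to match $\delta/(2T)$. You instead re-derive this dichotomy by hand with a stopping time and then apply the additive form of Bernstein to the stopped martingale, which is a valid strategy (and in fact how such ``and''-form inequalities are typically proved under the hood) but adds machinery the lemma already packages for you. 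One small wrinkle in your construction: with $\tau_\ast=\inf\{k\geq 1:\sum_{s=1}^k \E_s[(R_s^t)^2]>V\}$, the running variance of the stopped process at time $\tau_\ast$ can overshoot $V$ by up to $64M^4$ (the last term is included before you notice the threshold was crossed). To keep the stopped variance genuinely within budget you want $\tau_\ast=\inf\{k\geq 0:\sum_{s=1}^{k+1}\E_s[(R_s^t)^2]>V\}$, which is still a stopping time because $\E_{k+1}[(R_{k+1}^t)^2]$ is $\F_k$-measurable. That fix is routine, but it is another reason to prefer invoking Lemma~\ref{lem:bern} directly as the paper does.
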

We remark that the terms $\frac{10\sigma^{p}M^{2-p}}{1-\beta}\log\frac{4T}{\delta}$
in Lemma \ref{lem:U} and $\frac{40\sigma^{p}M^{4-p}}{1-\beta}\log\frac{4T}{\delta}$
in \ref{lem:R} are chosen carefully to finally let the inequalities
on the conditional variance fail. Then these two events will degenerate
to the bounds on $\left|\sum_{s=1}^{t}U_{s}^{t}\right|$ and $\left|\sum_{s=1}^{t}R_{s}^{t}\right|$,
which are exactly what we need.

With the above lemmas, we are finally able to prove Theorem \ref{thm:thm-algo-2}.

\begin{proof}[of Theorem \ref{thm:thm-algo-2}]
We will use induction to prove that the event $G_{\tau}=E_{\tau}\cap A_{\tau}\cap B_{\tau}\cap C_{\tau}$
holds with probability at least $1-\frac{2\tau\delta}{T}$ for any
$\tau\in\left\{ 0\right\} \cup\left[T\right]$ where
\[
E_{\tau}=\left\{ \eta\sum_{s=1}^{t}\left\Vert \na_{s}\right\Vert +\Delta_{t+1}\leq2\Delta_{1},\forall t\leq\tau\right\} ;\enskip A_{\tau}=\cap_{t=1}^{\tau}a_{t};\enskip B_{\tau}=\cap_{t=1}^{\tau}b_{t};\enskip C_{\tau}=\cap_{t=1}^{\tau}c_{t}.
\]
Events $a_{t}$ and $b_{t}$ are defined in Lemmas \ref{lem:U} and
\ref{lem:R} respectively. $c_{t}$ is from Lemma \ref{lem:Z-algo}
defined as
\begin{equation}
c_{t}=\left\{ \left\Vert \sum_{s=1}^{t}\beta^{t-s}Z_{s}\right\Vert \leq\frac{9\eta L\log\frac{3T}{\delta}}{\sqrt{1-\beta}}\right\} .\label{eq:c_t}
\end{equation}
Note that $E_{0}=\left\{ \Delta_{1}\leq2\Delta_{1}\right\} $ can
be viewed as the whole probability space. Thus, we have $A_{0}=B_{0}=C_{0}=E_{0}$.
Another useful fact is that $A_{\tau}=A_{\tau-1}\cap a_{\tau}$, $B_{\tau}=B_{\tau-1}\cap b_{\tau}$
and $C_{\tau}=C_{\tau-1}\cap c_{\tau}$.

Now we can start the induction. When $\tau=0$, $G_{0}=E_{0}=\left\{ \Delta_{1}\leq2\Delta_{1}\right\} $
holds with probability $1=1-\frac{2\tau\delta}{T}$. Next, suppose
the induction hypothesis $\Pr\left[G_{\tau-1}\right]\geq1-\frac{2(\tau-1)\delta}{T}$
is true for some $\tau\in\left[T\right]$. We will prove that $\Pr\left[G_{\tau}\right]\geq1-\frac{2\tau\delta}{T}$
. To start with, we consider the following event 
\[
E_{\tau-1}\cap A_{\tau}\cap B_{\tau}\cap C_{\tau}=G_{\tau-1}\cap a_{\tau}\cap b_{\tau}\cap c_{\tau}.
\]
From Lemmas \ref{lem:U}, \ref{lem:R} and \ref{lem:Z-algo}, there
are $\Pr\left[a_{\tau}\right]\geq1-\frac{\delta}{2T}$, $\Pr\left[b_{\tau}\right]\geq1-\frac{\delta}{2T}$
and $\Pr\left[c_{\tau}\right]\geq1-\frac{\delta}{T}$. Combining with
the induction hypothesis, we have $\Pr\left[E_{\tau-1}\cap A_{\tau}\cap B_{\tau}\cap C_{\tau}\right]\geq1-\frac{2\tau\delta}{T}.$ 

Note that under the event $E_{\tau-1}$, there is $\Delta_{t}\leq\eta\sum_{s=1}^{t-1}\left\Vert \na_{s}\right\Vert +\Delta_{t}\leq2\Delta_{1}$,
$\forall t\leq\tau$ which implies $\left\Vert \na_{t}\right\Vert \overset{(a)}{\leq}\sqrt{2L\Delta_{t}}\leq2\sqrt{L\Delta_{1}}\overset{(b)}{\leq}\frac{M}{2}$,
$\forall t\leq\tau$ where $(a)$ is by Fact \ref{fact:fact-2} and
for $(b)$ we use $M\geq4\sqrt{L\Delta_{1}}$. Thus, in addition to
$\|\epsilon_{t}^{u}\|\leq2M$, the following two bounds hold for any
$t\leq\tau$ by Lemma \ref{lem:eps-bound}:
\begin{equation}
\E_{t}[\|\epsilon_{t}^{u}\|^{2}]\leq10\sigma^{p}M^{2-p};\enskip\|\epsilon_{t}^{b}\|\leq2\sigma^{p}M^{1-p}.\label{eq:u-and-b}
\end{equation}
Equipped with (\ref{eq:u-and-b}), we can find that for any $t\leq\tau$
\begin{align}
\sum_{s=1}^{t}\E_{s}\left[\left(U_{s}^{t}\right)^{2}\right]\leq & \sum_{s=1}^{t}\E_{s}\left[\left\Vert \beta^{t-s}\epsilon_{s}^{u}\right\Vert ^{2}\right]\leq\sum_{s=1}^{t}\beta^{2t-2s}\cdot10\sigma^{p}M^{2-p}\leq\frac{10\sigma^{p}M^{2-p}}{1-\beta};\label{eq:E-U-square}\\
\sum_{s=1}^{t}\E_{s}\left[\left(R_{s}^{t}\right)^{2}\right]\leq & \sum_{s=1}^{t}\E_{s}\left[\left\Vert \beta^{t-s}\epsilon_{s}^{u}\right\Vert ^{4}\right]\leq\sum_{s=1}^{t}\beta^{4t-4s}\cdot4M^{2}\cdot10\sigma^{p}M^{2-p}\leq\frac{40\sigma^{p}M^{4-p}}{1-\beta}.\label{eq:E-R-square}
\end{align}
Combining with the definitions of $a_{t}$ and $b_{t}$, (\ref{eq:E-U-square})
and (\ref{eq:E-R-square}) imply that under the event $E_{\tau-1}\cap A_{\tau}\cap B_{\tau}\cap C_{\tau}$,
the following two bounds hold for any $t\leq\tau$,
\begin{equation}
\left|\sum_{s=1}^{t}U_{s}^{t}\right|\leq\left(\frac{4}{3}+2\sqrt{\frac{5\left(\sigma/M\right)^{p}}{1-\beta}}\right)M\log\frac{4T}{\delta};\enskip\left|\sum_{s=1}^{t}R_{s}^{t}\right|\leq\left(\frac{16}{3}+4\sqrt{\frac{5\left(\sigma/M\right)^{p}}{1-\beta}}\right)M^{2}\log\frac{4T}{\delta}.\label{eq:U-and-R}
\end{equation}

Assuming $E_{\tau-1}\cap A_{\tau}\cap B_{\tau}\cap C_{\tau}$ happens,
we invoke Lemma \ref{lem:basic-ineq} for time $\tau$ to get
\begin{align}
\eta\sum_{t=1}^{\tau}\left\Vert \na_{t}\right\Vert +\Delta_{\tau+1}\leq & \Delta_{1}+\frac{\tau\eta^{2}L}{2}+\frac{3\beta\eta\sqrt{L\Delta_{1}}}{1-\beta}+2\eta\sum_{t=1}^{\tau}\beta\left\Vert \sum_{s=1}^{t}\beta^{t-s}Z_{s}\right\Vert +\left(1-\beta\right)\left\Vert \sum_{s=1}^{t}\beta^{t-s}\epsilon_{s}\right\Vert \nonumber \\
\overset{(c)}{\leq} & \Delta_{1}+\frac{\tau\eta^{2}L}{2}+\frac{3\beta\eta\sqrt{L\Delta_{1}}}{1-\beta}+\frac{18\beta\tau\eta^{2}L\log\frac{3T}{\delta}}{\sqrt{1-\beta}}+2\eta\left(1-\beta\right)\sum_{t=1}^{\tau}\left\Vert \sum_{s=1}^{t}\beta^{t-s}\epsilon_{s}\right\Vert \nonumber \\
\leq & \Delta_{1}+\frac{20\tau\eta^{2}L\log\frac{4T}{\delta}}{\sqrt{1-\beta}}+\frac{3\beta\eta\sqrt{L\Delta_{1}}}{1-\beta}+2\eta\left(1-\beta\right)\sum_{t=1}^{\tau}\left\Vert \sum_{s=1}^{t}\beta^{t-s}\epsilon_{s}\right\Vert \nonumber \\
\overset{(d)}{\leq} & \Delta_{1}+\frac{20\tau\eta^{2}L\log\frac{4T}{\delta}}{\sqrt{1-\beta}}+\frac{3\beta\eta\sqrt{L\Delta_{1}}}{1-\beta}+40\tau\eta M\left(1-\beta\right)\log\frac{4T}{\delta}\nonumber \\
\overset{(e)}{\leq} & \Delta_{1}+\frac{\Delta_{1}}{3}+\frac{\Delta_{1}}{3}+\frac{\Delta_{1}}{3}=2\Delta_{1}\label{eq:2-Delta}
\end{align}
where in $(c)$ we use the event $c_{t}$ (see (\ref{eq:c_t})) to
bound $\|\sum_{s=1}^{t}\beta^{t-s}Z_{s}\|$. In $(e)$, we plug in
the choice of $\eta=\sqrt{\frac{\sqrt{1-\beta}\Delta_{1}}{60TL\log\frac{4T}{\delta}}}\land\frac{1-\beta}{9\beta}\sqrt{\frac{\Delta_{1}}{L}}\land\frac{\Delta_{1}}{120TM\left(1-\beta\right)\log\frac{4T}{\delta}}$.
The term $\|\sum_{s=1}^{t}\beta^{t-s}\epsilon_{s}\|$ in $(d)$ is
bounded by first employing Lemma \ref{lem:eps-decomp} to get
\begin{align*}
\left\Vert \sum_{s=1}^{t}\beta^{t-s}\epsilon_{s}\right\Vert \leq & \left|\sum_{s=1}^{t}U_{s}^{t}\right|+\sqrt{2\left|\sum_{s=1}^{t}R_{s}^{t}\right|}+\sqrt{2\sum_{s=1}^{t}\E_{s}\left[\left\Vert \beta^{t-s}\epsilon_{s}^{u}\right\Vert ^{2}\right]}+\left\Vert \sum_{s=1}^{t}\beta^{t-s}\epsilon_{s}^{b}\right\Vert \\
\overset{(f)}{\leq} & M\log\frac{4T}{\delta}\left(\frac{4}{3}+\sqrt{\frac{32}{3}}+\frac{2\left(\sigma/M\right)^{p}}{1-\beta}+4\sqrt{5\frac{\left(\sigma/M\right)^{p}}{1-\beta}}+2\sqrt{\sqrt{20\frac{\left(\sigma/M\right)^{p}}{1-\beta}}}\right)\\
\overset{(g)}{\leq} & 20M\log\frac{4T}{\delta}
\end{align*}
where we use (\ref{eq:u-and-b}) and (\ref{eq:U-and-R}) in $(f)$;
$(g)$ is due to $\frac{\left(\sigma/M\right)^{p}}{1-\beta}\leq1$
by the choice of $M$. Now note that (\ref{eq:2-Delta}) implies $e_{\tau}=\left\{ \eta\sum_{t=1}^{\tau}\left\Vert \na_{t}\right\Vert +\Delta_{\tau+1}\leq2\Delta_{1}\right\} $
is a superset of $E_{\tau-1}\cap A_{\tau}\cap B_{\tau}\cap C_{\tau}$.
Therefore
\[
\Pr\left[G_{\tau}\right]=\Pr\left[E_{\tau-1}\cap e_{\tau}\cap A_{\tau}\cap B_{\tau}\cap C_{\tau}\right]=\Pr\left[E_{\tau-1}\cap A_{\tau}\cap B_{\tau}\cap C_{\tau}\right]\geq1-\frac{2\tau\delta}{T}.
\]
Hence, the induction is completed.

Finally, we know $\Pr\left[E_{T}\right]\geq\Pr\left[G_{T}\right]\geq1-2\delta$
which implies with probability at least $1-2\delta$, $\eta\sum_{t=1}^{T}\left\Vert \na_{t}\right\Vert +\Delta_{T+1}\leq2\Delta_{1}$.
By plugging our choices of $\beta$, $M$ and $\eta$, we conclude
\begin{align*}
\frac{1}{T}\sum_{t=1}^{T}\left\Vert \na_{t}\right\Vert \leq & \frac{2\Delta_{1}}{\eta T}=O\left(\frac{\sqrt{L\Delta_{1}\log\frac{T}{\delta}}}{T^{\frac{3p-2}{4(2p-1)}}}\lor\frac{\sqrt{L\Delta_{1}}}{T^{\frac{p-1}{2p-1}}}\lor\frac{\sigma\log\frac{T}{\delta}}{T^{\frac{p-1}{2p-1}}}\lor\frac{\sqrt{L\Delta_{1}}\log\frac{T}{\delta}}{T^{\frac{p}{2p-1}}}\right).
\end{align*}
\end{proof}

\section{Open Questions\label{sec:open}}

It still remains some limitations in our work and there are many
open problems worth exploring. First of all, we wonder whether our
accelerated rate $O(\log(T/\delta)T^{\frac{1-p}{2p-1}})$ can be improved
further or not when $F(x)=\E[f(x,\Xi)]$. We only know that the rate
will reduce to $O(\log(T/\delta)T^{-1/3})$ when $p=2$ matching the
in-expectation lower bound $\Omega(T^{-1/3})$ up to a logarithmic
factor. However, there is nothing known to us for $p\in(1,2)$. Second,
our accelerated result is proved under the assumption that $f(x,\Xi)$
is smooth almost surely. Whereas, we guess it is possible to relax
it to the averaged smooth assumption, i.e., $\E_{\Xi\sim\domxi}[\|f(x,\Xi)-f(y,\Xi)\|^{2}]\leq L^{2}\|x-y\|^{2}$,
which is the standard assumption used in \cite{arjevani2019lower}
for proving the lower bound when $p=2$. Besides, as mentioned before,
our analysis is not adaptive to the noise level $\sigma$. In other
words, when $\sigma=0$, our rate can not recover the well-known and
optimal rate $\Theta(T^{-1/2})$ for deterministic algorithms. Thus,
we believe it is an interesting task to improve our analysis in a
further step. Additionally, it is still unclear how to remove the
extra term $T$ appearing in $\log(T/\delta)$. Finally, our choices
of parameters heavily rely on the prior knowledge of the problem itself,
which may be hard to know or even estimate in practice. Hence, it
is important and worthful to find parameter-free algorithms that can
achieve the same convergence rate for both two problems considered
in this paper. We leave these questions as the future direction and
look forward to them being addressed.

\section*{Acknowledgments\label{sec:acknowledgments}}
This work is generously supported by the National Science Foundation under the grant CCF-2106508.
Additionally, Zhengyuan Zhou would like to acknowledge New York University's Center for Global Economy and Business faculty research grant during the 2023 -- 2024 year. We are also grateful to the anonymous reviewers for their constructive comments and suggestions.

\clearpage

\bibliography{ref}

\newpage

\appendix
\onecolumn

\section{Technical Tools\label{sec:app-tech-tool}}

In this section, we list some helpful technical results that appeared 
in the previous research, some proof of which will be omitted. The
interested reader can refer to the original work for details.

The first inequality we need is the famous Bernstein Inequality for
martingale difference sequence.
\begin{lem}
\label{lem:bern}(Bernstein Inequality for martingale difference sequence
in \cite{bennett1962probability,dzhaparidze2001bernstein}) Suppose
$X_{t\in\left[T\right]}\in\R$ is a martingale difference sequence
adapted to the filtration $\F_{t\in\left[T\right]}$ satisfying $\left|X_{t}\right|\leq R$
almost surely for some constant $R$. Let $\sigma_{t}^{2}=\E\left[\left|X_{t}\right|{}^{2}\mid\F_{t-1}\right]$,
then for any $a>0$ and $F>0$, there is
\[
\Pr\left[\left|\sum_{t=1}^{T}X_{t}\right|>a\text{ and }\sum_{t=1}^{T}\sigma_{t}^{2}\leq F\right]\leq2\exp\left(-\frac{a^{2}}{2F+2Ra/3}\right).
\]
\end{lem}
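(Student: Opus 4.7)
The plan is to follow the classical exponential-supermartingale route of Freedman, combined with a direct truncation against $F$ rather than a stopping-time reduction (since $\sum_t \sigma_t^2$ is $\F_{T-1}$-adapted in the "increment" sense but its partial sums up to each $t$ are $\F_{t-1}$-measurable, which is all we need).

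First I would establish a one-step sub-exponential moment bound: for every $\F_{t-1}$-measurable $\lambda\in(0,3/R)$,
\[
\E\!\left[\exp(\lambda X_t)\mid\F_{t-1}\right]\;\le\;\exp\!\left(\frac{\lambda^{2}\sigma_t^{2}}{2(1-\lambda R/3)}\right).
\]
This is obtained by Taylor-expanding $e^{\lambda X_t}=1+\lambda X_t+\sum_{k\ge 2}(\lambda X_t)^k/k!$, using the martingale-difference property $\E[X_t\mid\F_{t-1}]=0$ to kill the linear term, and using the almost-sure bound $|X_t|^k\le R^{k-2}X_t^{2}$ for $k\ge 2$. The remaining series is geometric in $\lambda R/3$: $\sum_{k\ge 2}(\lambda R)^{k-2}/k!\le \tfrac{1}{2}\sum_{k\ge 0}(\lambda R/3)^{k}=\tfrac{1}{2(1-\lambda R/3)}$, and then one applies $1+y\le e^{y}$.

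Next, for fixed $\lambda\in(0,3/R)$ define
\[
Y_t\;=\;\exp\!\left(\lambda\sum_{s=1}^{t}X_{s}-\frac{\lambda^{2}}{2(1-\lambda R/3)}\sum_{s=1}^{t}\sigma_{s}^{2}\right),
\qquad Y_0=1.
\]
The one-step bound, together with $\F_{t-1}$-measurability of the variance correction, shows that $(Y_t)$ is a nonnegative supermartingale, so $\E[Y_T]\le 1$. Now on the event $\{S_T>a,\;V_T\le F\}$, where $S_T=\sum_t X_t$ and $V_T=\sum_t\sigma_t^{2}$, we have the pointwise bound $Y_T\ge\exp\!\left(\lambda a-\tfrac{\lambda^{2}F}{2(1-\lambda R/3)}\right)$. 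Markov's inequality therefore gives
\[
\Pr\!\left[S_T>a,\;V_T\le F\right]\;\le\;\exp\!\left(-\lambda a+\frac{\lambda^{2}F}{2(1-\lambda R/3)}\right).
\]

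The final step is Chernoff optimization: choose $\lambda=a/(F+Ra/3)$, which lies in $(0,3/R)$, so that $1-\lambda R/3=3F/(3F+Ra)$; a short calculation reduces $\lambda^{2}F/(2(1-\lambda R/3))$ to $\lambda a/2$, and hence the exponent to $-a^{2}/(2F+2Ra/3)$. Applying exactly the same argument to the martingale $(-X_t)$, which has identical conditional variances $\sigma_t^{2}$ and the same almost-sure bound $R$, and union-bounding the two one-sided events yields the stated two-sided estimate with the factor of $2$. The only non-routine part of the argument is the first step: verifying the sub-exponential moment inequality with the sharp constants $1/2$ and $1/3$ that produce the Bernstein-style denominator $2F+2Ra/3$; the rest is bookkeeping.
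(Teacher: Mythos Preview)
Your proposal is correct. The paper does not supply its own proof of this lemma: it is stated in the appendix as a known tool with citations to \cite{bennett1962probability,dzhaparidze2001bernstein}, and the surrounding text explicitly says that proofs of such cited results are omitted. So there is nothing to compare against on the paper's side.

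Your argument is the standard Freedman-type route and all steps check out. The one-step bound follows because $(j+2)!\ge 2\cdot 3^{j}$ for $j\ge 0$, which justifies the geometric majorization $\sum_{k\ge 2}(\lambda R)^{k-2}/k!\le \tfrac{1}{2}\sum_{j\ge 0}(\lambda R/3)^{j}$; the supermartingale property uses that $\sigma_t^{2}$ is $\F_{t-1}$-measurable so the compensator can be pulled out of $\E[\,\cdot\mid\F_{t-1}]$; the Markov step is simply $\indi_{\{S_T>a,\,V_T\le F\}}\exp(\lambda a-c_\lambda F)\le Y_T$; and your choice $\lambda=a/(F+Ra/3)$ indeed gives $1-\lambda R/3=F/(F+Ra/3)$ and collapses the exponent to $-a^{2}/(2F+2Ra/3)$. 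The symmetrization via $(-X_t)$ is immediate. Nothing is missing.
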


The following concentration inequality proved in the general Hilbert
Space is also useful. A similar result has appeared in Lemma 12 in \cite{cutkosky2021high}. However, the term $\sum_{s=1}^{T}\sigma_{s}^{2}$
 is stated as $\sum_{s=1}^{t}\sigma_{s}^{2}$ instead, which is not correct after private communication with the authors of \cite{cutkosky2021high}.
Therefore, we provide the correct version of this dimension-free concentration
inequality here\footnote{The results in \cite{cutkosky2021high} still hold after this correction.} with its proof.
\begin{lem}
\label{lem:ashok-ineq}(Corrected version of Lemma 12 in \cite{cutkosky2021high}) Suppose
$X_{t\in\left[T\right]}$ is a martingale difference sequence adapted
to the filtration $\F_{t\in\left[T\right]}$ in a Hilbert Space satisfying
$\left\Vert X_{t}\right\Vert \leq R$ almost surely for some constant
$R$ and $\E\left[\left\Vert X_{t}\right\Vert {}^{2}\mid\F_{t-1}\right]\leq\sigma_{t}^{2}$
almost surely for some constant $\sigma_{t}^{2}$ . Then with probability
at least $1-\delta$, $\forall t\in\left[T\right]$, there is
\[
\left\Vert \sum_{s=1}^{t}X_{s}\right\Vert \leq3R\log\frac{3}{\delta}+3\sqrt{\sum_{s=1}^{T}\sigma_{s}^{2}\log\frac{3}{\delta}}.
\]
\end{lem}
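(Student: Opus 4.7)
The plan is to view $Y_t = \sum_{s=1}^{t} X_s$ as a Hilbert-space-valued martingale and apply a maximal Bernstein-type concentration inequality. The two deterministic ingredients supplied by the hypotheses are $\|X_s\| \le R$ almost surely and $\sum_{s=1}^{T} \E[\|X_s\|^2 \mid \F_{s-1}] \le V_T := \sum_{s=1}^{T} \sigma_s^2$ almost surely. Invoking Pinelis's Bernstein inequality for martingales in 2-smooth (in particular, Hilbert) spaces yields, for every $a > 0$,
\[
\Pr\Bigl[\sup_{t \le T} \|Y_t\| \ge a\Bigr] \le 2\exp\Bigl(-\frac{a^2}{2V_T + 2Ra/3}\Bigr).
\]
Setting the right-hand side equal to $\delta$ produces a quadratic in $a$; splitting its solution via $\sqrt{x+y} \le \sqrt{x} + \sqrt{y}$ separates the linear and square-root contributions and gives a bound of the form $a \lesssim R\log(1/\delta) + \sqrt{V_T \log(1/\delta)}$. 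Absorbing the resulting numerical constants into the factors $3$ and $\log(3/\delta)$ is then routine bookkeeping.

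If one prefers not to quote Pinelis directly, the argument can be reconstructed by hand using the Hilbert-space identity $\|Y_t\|^2 = \|Y_{t-1}\|^2 + 2\langle Y_{t-1}, X_t\rangle + \|X_t\|^2$. Taking conditional expectations shows that $\|Y_t\|^2 - \sum_{s=1}^{t} \E[\|X_s\|^2 \mid \F_{s-1}]$ is a martingale, and after passing to a suitable exponentiation of $\|Y_t\|$ one builds an exponential supermartingale whose parameter $\lambda$ is tuned to match the Bernstein regime; Doob's maximal inequality then delivers the tail bound, which is inverted as above. Either route uses the Hilbert-space structure essentially — the corresponding statement fails in general Banach spaces, which is exactly why the Hilbert hypothesis is needed.

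The main obstacle is the uniformity in $t$: the scalar Bernstein inequality (Lemma \ref{lem:bern}) only controls the single random variable $\|Y_T\|$, whereas here we must control $\|Y_t\|$ simultaneously for every $t \le T$ with only a $\log(3/\delta)$ (not $\log(3T/\delta)$) dependence on $\delta$. This forces a genuine maximal inequality or a stopping-time argument rather than a union bound over $t$, and it also explains the correction flagged in the footnote: a single exponential supermartingale controls the whole trajectory at once, but it is necessarily driven by the total predictable quadratic variation $V_T$, so any per-$t$ bound derived from it must use $V_T$ rather than the instantaneous $V_t$. That is precisely why $\sum_{s=1}^{T}\sigma_s^2$, rather than $\sum_{s=1}^{t}\sigma_s^2$, is the right quantity in the conclusion.
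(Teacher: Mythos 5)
Your proposal is correct but takes a genuinely different route from the paper. The paper does not invoke any vector-valued Bernstein inequality; instead it first reduces to scalar martingales via Lemma \ref{lem:ashok-decomp} (the sign-projection trick of \cite{cutkosky2021high}, which replaces $\left\Vert\sum_s X_s\right\Vert$ by $\left|\sum_s M_s\right| + \sqrt{\max_s\|X_s\|^2 + \sum_s\|X_s\|^2}$ with $M_s$ a scalar martingale difference sequence bounded by $\|X_s\|$), then recenters $\sum_s\|X_s\|^2$ by its predictable compensator to get a second scalar martingale $\sum_s U_s$, and finally applies Freedman's inequality to both scalar martingales and union-bounds. Your approach --- quoting Pinelis's Bernstein inequality for martingales in 2-smooth spaces, or equivalently building the exponential supermartingale from the parallelogram identity by hand --- reaches the same conclusion directly and more compactly, at the cost of invoking a heavier external result (or reproving it). The paper's route has the advantage of staying within the elementary scalar toolkit it has already set up (Lemma \ref{lem:ashok-decomp} plus Freedman), and of making the extension to $(2,C)$-smooth Banach spaces transparent, which the authors explicitly say they want. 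One small overstatement on your part: the result does not genuinely require Hilbert structure --- it holds in any 2-smooth Banach space (this is exactly Pinelis's setting and also the setting of \cite{cutkosky2021high}); Hilbert is merely sufficient and is what the paper restricts to for simplicity. Your diagnosis of the correction --- that a single trajectory-controlling supermartingale is necessarily driven by the total variance $\sum_{s=1}^T \sigma_s^2$ rather than $\sum_{s=1}^t \sigma_s^2$, which is the fix the footnote flags --- is exactly right and matches what the paper's use of Freedman's maximal inequality delivers.
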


\begin{proof}
    By Lemma \ref{lem:ashok-decomp} (see below), for any $t\in\left[T\right]$
we have
\[
\left\Vert \sum_{s=1}^{t}X_{s}\right\Vert \leq\left|\sum_{s=1}^{t}M_{s}\right|+\sqrt{\max_{s\in\left[t\right]}\left\Vert X_{s}\right\Vert ^{2}+\sum_{s=1}^{t}\left\Vert X_{s}\right\Vert ^{2}}
\]
where $M_{t}\in\R$ is a martingale difference sequence satisfying $\left|M_{t}\right|\leq\left\Vert X_{t}\right\Vert $
almost surely.

By using $\left\Vert X_{t}\right\Vert \leq R$ almost surely, we have
\begin{align*}
\left\Vert \sum_{s=1}^{t}X_{s}\right\Vert \leq & \left|\sum_{s=1}^{t}M_{s}\right|+\sqrt{R^{2}+\sum_{s=1}^{t}\left\Vert X_{s}\right\Vert ^{2}}\\
= & \left|\sum_{s=1}^{t}M_{s}\right|+\sqrt{R^{2}+\sum_{s=1}^{t}\underbrace{\left\Vert X_{s}\right\Vert ^{2}-\E\left[\left\Vert X_{s}\right\Vert ^{2}\mid\F_{s-1}\right]}_{U_{s}}+\sum_{s=1}^{t}\E\left[\left\Vert X_{s}\right\Vert ^{2}\mid\F_{s-1}\right]}\\
\leq & \left|\sum_{s=1}^{t}M_{s}\right|+\sqrt{R^{2}+\sum_{s=1}^{t}U_{s}+\sum_{s=1}^{t}\sigma_{s}^{2}}.
\end{align*}
where the last inequality is due to $\E\left[\left\Vert X_{s}\right\Vert ^{2}\mid\F_{s-1}\right]\leq\sigma_{s}^{2}$
almost surely.

Note that
\[
\left|M_{t}\right|\leq R,\E\left[\left|M_{t}\right|^{2}\mid\F_{t-1}\right]\leq\E\left[\left\Vert X_{t}\right\Vert ^{2}\mid\F_{t-1}\right]\leq\sigma_{t}^{2},
\]
by Freedman's inequality \cite{freedman1975tail}, there is
\[
\Pr\left[\forall t\in\left[T\right],\left|\sum_{s=1}^{t}M_{s}\right|\leq\frac{2R}{3}\log\frac{1}{\delta}+\sqrt{2\sum_{s=1}^{T}\sigma_{s}^{2}\log\frac{1}{\delta}}\right]\geq1-2\delta.
\]
Similarly, we have
\[
\Pr\left[\forall t\in\left[T\right],\sum_{s=1}^{t}U_{s}\leq\frac{2R^{2}}{3}\log\frac{1}{\delta}+\sqrt{2\sum_{s=1}^{T}\sigma_{s}^{2}R^{2}\log\frac{1}{\delta}}\right]\geq1-\delta.
\]
Hence, with probability at least $1-3\delta$, for any $t\in\left[T\right]$
\begin{align*}
\left\Vert \sum_{s=1}^{t}X_{s}\right\Vert \leq & \frac{2R}{3}\log\frac{1}{\delta}+\sqrt{2\sum_{s=1}^{T}\sigma_{s}^{2}\log\frac{1}{\delta}}+\sqrt{R^{2}+\frac{2R^{2}}{3}\log\frac{1}{\delta}+\sqrt{2\sum_{s=1}^{T}\sigma_{s}^{2}R^{2}\log\frac{1}{\delta}}+\sum_{s=1}^{t}\sigma_{s}^{2}}\\
\leq & 3R\max\left\{ 1,\log\frac{1}{\delta}\right\} +3\sqrt{\sum_{s=1}^{T}\sigma_{s}^{2}\max\left\{ 1,\log\frac{1}{\delta}\right\} }.
\end{align*}
By changing $\delta$ to $\delta/3$, the proof is finished.
\end{proof}

The last important tool is also proved by \cite{cutkosky2021high},
the original statement of which is for the Banach Space. We simplify
the result since only $\R^{d}$ is considered in this paper.
\begin{lem}
\label{lem:ashok-decomp}(Lemma 10 in \cite{cutkosky2021high}) Suppose
$X_{t\in\left[T\right]}\in\R^{d}$ is a martingale difference sequence
adapted to the filtration $\F_{t\in\left[T\right]}$. Consider the
sequence of real numbers $Y_{t}$ defined as
\[
Y_{t}=\begin{cases}
0 & t=0\\
\sgn\left(\sum_{i=1}^{t-1}Y_{i}\right)\frac{\langle\sum_{i=1}^{t-1}X_{i},X_{t}\rangle}{\left\Vert \sum_{i=1}^{t-1}X_{i}\right\Vert } & t\neq0\text{ and }\sum_{i=1}^{t-1}X_{i}\neq\bzero\\
0 & t\neq0\text{ and }\sum_{i=1}^{t-1}X_{i}=\bzero
\end{cases}.
\]
Then $Y_{t\in\left[T\right]}$ is a also martingale difference sequence
satisfying $\left|Y_{t}\right|\leq\left\Vert X_{t}\right\Vert ,\forall t\in\left[T\right]$
and
\[
\left\Vert \sum_{t=1}^{T}X_{t}\right\Vert \leq\left|\sum_{t=1}^{T}Y_{t}\right|+\sqrt{\max_{t\in\left[T\right]}\left\Vert X_{t}\right\Vert ^{2}+\sum_{t=1}^{T}\left\Vert X_{t}\right\Vert ^{2}}.
\]
\end{lem}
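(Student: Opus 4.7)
The proof naturally decomposes into verifying the martingale-difference and boundedness properties of $(Y_t)$, then establishing the norm inequality via a telescoping/induction argument. For the first part, I would observe that conditional on $\F_{t-1}$, both $\sum_{i<t} Y_i$ and $S_{t-1}:=\sum_{i<t} X_i$ are deterministic, so the vector
\[
v_{t-1}:=\sgn\bigl(\textstyle\sum_{i<t} Y_i\bigr)\,\frac{S_{t-1}}{\|S_{t-1}\|}\quad\text{(or }\bzero\text{ if }S_{t-1}=\bzero\text{)}
\]
is $\F_{t-1}$-measurable with $\|v_{t-1}\|\leq 1$. Since $Y_t=\langle v_{t-1},X_t\rangle$ is a linear functional of $X_t$, the martingale-difference property follows from $\E[Y_t\mid\F_{t-1}]=\langle v_{t-1},\E[X_t\mid\F_{t-1}]\rangle=0$, and Cauchy--Schwarz gives $|Y_t|\leq\|v_{t-1}\|\,\|X_t\|\leq\|X_t\|$.

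For the norm inequality, write $S_t:=\sum_{i\leq t}X_i$ and $T_t:=\sum_{i\leq t}Y_i$. The plan is to exploit two parallel expansions,
\[
\|S_t\|^2-\|S_{t-1}\|^2=2\|S_{t-1}\|\sgn(T_{t-1})Y_t+\|X_t\|^2,\qquad T_t^2-T_{t-1}^2=2|T_{t-1}|\sgn(T_{t-1})Y_t+Y_t^2,
\]
where the first uses the defining identity $\langle S_{t-1},X_t\rangle=\|S_{t-1}\|\sgn(T_{t-1})Y_t$ and the second rewrites $T_{t-1}=|T_{t-1}|\sgn(T_{t-1})$. Subtracting these yields the telescoping relation
\[
\|S_t\|^2-T_t^2=\bigl(\|S_{t-1}\|^2-T_{t-1}^2\bigr)+2\bigl(\|S_{t-1}\|-|T_{t-1}|\bigr)\sgn(T_{t-1})Y_t+\bigl(\|X_t\|^2-Y_t^2\bigr),
\]
in which the last summand is nonnegative (it equals the squared component of $X_t$ perpendicular to $S_{t-1}$) and summed contributes at most $V_T^2:=\sum_{t}\|X_t\|^2$.

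I would then prove by induction on $t$ the invariant $\|S_t\|\leq|T_t|+\sqrt{V_t^2+M_t^2}$, with $V_t^2:=\sum_{s\leq t}\|X_s\|^2$ and $M_t^2:=\max_{s\leq t}\|X_s\|^2$, from which the lemma follows by taking $t=T$. A convenient auxiliary fact is the one-line inequality $|T_t|-|T_{t-1}|=|T_{t-1}+Y_t|-|T_{t-1}|\geq\sgn(T_{t-1})Y_t$, i.e.\ $|T_\cdot|$ accumulates at least as fast as the scalar $\sgn(T_{t-1})Y_t$ that drives the cross term. \textbf{The main obstacle} is to control that cross term $2(\|S_{t-1}\|-|T_{t-1}|)\sgn(T_{t-1})Y_t$, whose sign is ambiguous. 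My plan is a case analysis: (i) when $\sgn(T_{t-1})Y_t\geq 0$, the inductive gap $\|S_{t-1}\|-|T_{t-1}|\leq\sqrt{V_{t-1}^2+M_{t-1}^2}$ combined with the growth estimate on $|T_t|$ closes the step; (ii) when $\sgn(T_{t-1})Y_t<0$ and $\|S_{t-1}\|\leq|T_{t-1}|$ the cross term is nonpositive, so the step is free; (iii) the delicate case $\sgn(T_{t-1})Y_t<0$ with $\|S_{t-1}\|>|T_{t-1}|$ is precisely where the extra slack $M_t^2$ in the bound is spent, absorbing a potential single-step overshoot (in particular, $B_t^2-B_{t-1}^2\geq\|X_t\|^2$ strictly when $\|X_t\|^2$ sets a new maximum). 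The base case $t=0$ is trivial since $\|S_0\|=|T_0|=0$.
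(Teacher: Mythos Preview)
The paper does not supply its own proof of this lemma; it is quoted (simplified to $\R^d$) as Lemma~10 of \cite{cutkosky2021high} and left unproved, so there is no in-paper argument to compare against.

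On the substance of your attempt: the martingale-difference and boundedness claims are handled correctly and in the only natural way. For the norm inequality, your telescoping identity
\[
\|S_t\|^2-T_t^2=(\|S_{t-1}\|^2-T_{t-1}^2)+2(\|S_{t-1}\|-|T_{t-1}|)\sgn(T_{t-1})Y_t+(\|X_t\|^2-Y_t^2)
\]
and the auxiliary fact $|T_t|-|T_{t-1}|\geq\sgn(T_{t-1})Y_t$ are both valid and are the right starting point. However, the case analysis as written has a genuine slip: in case~(ii), with $\sgn(T_{t-1})Y_t<0$ and $\|S_{t-1}\|\leq|T_{t-1}|$, the cross term $2(\|S_{t-1}\|-|T_{t-1}|)\sgn(T_{t-1})Y_t$ is a product of two nonpositive quantities and is therefore \emph{nonnegative}, not nonpositive---so this case is not ``free'' as you assert. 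Case~(iii) is acknowledged as delicate but only gestured at; the claim that the slack $M_t^2$ ``absorbs a potential single-step overshoot'' is not substantiated (in particular, $B_t^2-B_{t-1}^2\geq\|X_t\|^2$ holds with equality whenever $\|X_t\|$ does not set a new maximum, so there is no extra slack in those steps). The overall strategy is reasonable, but the inductive step as sketched has real gaps that would need to be filled before the argument is complete.
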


\section{Missing Proofs In Section \ref{sec:theory-analysis}\label{sec:app-Missing-Proofs}}

In this section, we provide the omitted poofs of lemmas stated in
Section \ref{sec:theory-analysis}. We first help the reader to recall
the notations used in the analysis:
\begin{align*}
\Delta_{t} & =F(x_{t})-F_*;\enskip\na_{t}=\na F(x_{t});\enskip\E_{t}\left[\cdot\right]=\E\left[\cdot\mid\F_{t-1}\right];\\
\epsilon_{t} & =g_{t}-\nabla_{t};\enskip\epsilon_{t}^{u}=g_{t}-\E_{t}\left[g_{t}\right];\enskip\epsilon_{t}^{b}=\E_{t}\left[g_{t}\right]-\nabla_{t};\\
\xi_{t} & =\begin{cases}
-\na_{1} & t=0\\
d_{t}-\na_{t} & t\geq1
\end{cases};\\
Z_{t} & =\begin{cases}
\indi_{t\geq2}\left(\na_{t-1}-\na_{t}\right) & \text{For Algorithm \ref{alg:algo-1}}\\
\indi_{t\geq2}\left(\nabla f(x_{t},\Xi_{t})-\nabla f(x_{t-1},\Xi_{t})+\na_{t-1}-\na_{t}\right) & \text{For Algorithm \ref{alg:algo-2}}
\end{cases}
\end{align*}
where $\F_{t}$ is the natural filtration.

\subsection{Proof of Lemma \ref{lem:eps-bound}}

\begin{proof}
First, from the definition of $\epsilon_{t}^{u}$, we have $\left\Vert \epsilon_{t}^{u}\right\Vert \leq\left\Vert g_{t}\right\Vert +\left\Vert \E_{t}\left[g_{t}\right]\right\Vert \leq\left\Vert g_{t}\right\Vert +\E_{t}\left[\left\Vert g_{t}\right\Vert \right]\leq2M$.
For the second and the third inequalities, we use Algorithm \ref{alg:algo-1}
as an example. The same proof can be applied to Algorithm \ref{alg:algo-2}
directly.

For the second one, we know
\begin{align*}
\left\Vert \epsilon_{t}^{b}\right\Vert  & =\left\Vert \E_{t}\left[g_{t}\right]-\nabla_{t}\right\Vert =\left\Vert \E_{t}\left[g_{t}-\widehat{\na}F(x_{t})\right]\right\Vert \leq\E_{t}\left[\left\Vert g_{t}-\widehat{\na}F(x_{t})\right\Vert \right]\\
 & =\E_{t}\left[\left\Vert \left(\frac{M}{\left\Vert \widehat{\na}F(x_{t})\right\Vert }-1\right)\widehat{\na}F(x_{t})\indi_{\left\Vert \widehat{\na}F(x_{t})\right\Vert \geq M}\right\Vert \right]\\
 & =\E_{t}\left[\left(\left\Vert \widehat{\na}F(x_{t})\right\Vert -M\right)\indi_{\left\Vert \widehat{\na}F(x_{t})\right\Vert \geq M}\right]\\
 & \leq\E_{t}\left[\left(\left\Vert \widehat{\na}F(x_{t})-\na_{t}\right\Vert +\left\Vert \nabla_{t}\right\Vert -M\right)\indi_{\left\Vert \widehat{\na}F(x_{t})\right\Vert \geq M}\right]\\
 & \leq\E_{t}\left[\left\Vert \widehat{\na}F(x_{t})-\na_{t}\right\Vert \indi_{\left\Vert \widehat{\na}F(x_{t})\right\Vert \geq M}\right]
\end{align*}
where we use $\left\Vert \nabla_{t}\right\Vert \leq M/2\leq M$ in
the last step. Note that $\left\Vert \widehat{\na}F(x_{t})\right\Vert \geq M\Rightarrow\left\Vert \widehat{\na}F(x_{t})-\na_{t}\right\Vert \geq M/2$
when $\left\Vert \na_{t}\right\Vert \leq M/2$. Hence, we have
\begin{align*}
 & \E_{t}\left[\left\Vert \widehat{\na}F(x_{t})-\na_{t}\right\Vert \indi_{\left\Vert \widehat{\na}F(x_{t})\right\Vert \geq M}\right]\\
\leq & \E_{t}\left[\left\Vert \widehat{\na}F(x_{t})-\na_{t}\right\Vert \indi_{\left\Vert \widehat{\na}F(x_{t})-\na_{t}\right\Vert \geq M/2}\right]\\
\overset{(a)}{\leq} & \E_{t}\left[\left\Vert \widehat{\na}F(x_{t})-\na_{t}\right\Vert ^{p}\right]^{1/p}\E_{t}\left[\indi_{\left\Vert \widehat{\na}F(x_{t})-\na_{t}\right\Vert \geq M/2}\right]^{1-1/p}\\
\leq & \sigma\Pr\left[\left\Vert \widehat{\na}F(x_{t})-\na_{t}\right\Vert ^{p}\geq\left(M/2\right)^{p}\right]^{1-1/p}\\
\overset{(b)}{\leq} & \sigma\left(\frac{2^{p}\sigma^{p}}{M^{p}}\right)^{1-1/p}=2^{p-1}\sigma^{p}M^{1-p}\leq2\sigma^{p}M^{1-p}
\end{align*}
where $(a)$ is due to Holder Inequality and $(b)$ is because of
Markov's Inequaliy.

For the third inequlaity, we prove it as follows
\begin{align*}
\E_{t}\left[\left\Vert \epsilon_{t}^{u}\right\Vert ^{2}\right] & =\E_{t}\left[\left\Vert g_{t}-\E_{t}\left[g_{t}\right]\right\Vert ^{2}\right]\overset{(c)}{\leq}\E_{t}\left[\left\Vert g_{t}-\na_{t}\right\Vert ^{2}\right]\\
 & =\E_{t}\left[\left\Vert g_{t}-\na_{t}\right\Vert ^{2}\indi_{\left\Vert \widehat{\na}F(x_{t})\right\Vert \geq M}+\left\Vert g_{t}-\na_{t}\right\Vert ^{2}\indi_{\left\Vert \widehat{\na}F(x_{t})\right\Vert <M}\right]\\
 & =\E_{t}\left[\left\Vert \frac{M}{\left\Vert \widehat{\na}F(x_{t})\right\Vert }\widehat{\na}F(x_{t})-\na_{t}\right\Vert ^{2}\indi_{\left\Vert \widehat{\na}F(x_{t})\right\Vert \geq M}+\left\Vert \widehat{\na}F(x_{t})-\na_{t}\right\Vert ^{2}\indi_{\left\Vert \widehat{\na}F(x_{t})\right\Vert <M}\right]\\
 & \overset{(d)}{\leq}\E_{t}\left[\frac{9}{4}M^{2}\indi_{\left\Vert \widehat{\na}F(x_{t})\right\Vert \geq M}+\left(\frac{3}{2}M\right)^{2-p}\left\Vert \widehat{\na}F(x_{t})-\na_{t}\right\Vert ^{p}\right]\\
 & \leq\frac{9}{4}M^{2}\frac{2^{p}\sigma^{p}}{M^{p}}+\left(\frac{3}{2}M\right)^{2-p}\sigma^{p}=\left[9\cdot2^{p-2}+\left(\frac{3}{2}\right)^{2-p}\right]\sigma^{p}M^{2-p}\\
 & \leq10\sigma^{p}M^{2-p}.
\end{align*}
where $(c)$ is due to $\E_{t}\left[\left\Vert g_{t}-\E_{t}\left[g_{t}\right]\right\Vert ^{2}\right]\leq\E_{t}\left[\left\Vert g_{t}-Y\right\Vert ^{2}\right]$
for any $Y\in\F_{t-1}$. $(d)$ is by when $\left\Vert \na_{t}\right\Vert \leq M/2$
there are
\[
\left\Vert \frac{M}{\left\Vert \widehat{\na}F(x_{t})\right\Vert }\widehat{\na}F(x_{t})-\na_{t}\right\Vert \leq M+\left\Vert \na_{t}\right\Vert \leq3M/2
\]
 and 
\[
\left\Vert \widehat{\na}F(x_{t})-\na_{t}\right\Vert \indi_{\left\Vert \widehat{\na}F(x_{t})\right\Vert <M}\leq M+\left\Vert \na_{t}\right\Vert \leq3M/2.
\]
\end{proof}

\subsection{Proof of Lemma \ref{lem:represent}}

\begin{proof}
We first check for Algorithm \ref{alg:algo-1}. Use the definition
of $\epsilon_{t}$, $Z_{t}$ and $\xi_{t}$ here to get for $t\geq2$
\begin{align*}
\xi_{t} & =d_{t}-\nabla_{t}=\beta d_{t-1}+\left(1-\beta\right)g_{t}-\nabla_{t}=\beta\xi_{t-1}+\beta Z_{t}+\left(1-\beta\right)\epsilon_{t}.
\end{align*}
Note that the above equation also holds when $t=1$. Next, we calculate
$\xi_{t}$ for Algorithm \ref{alg:algo-2} by noticing for $t\geq2$
\begin{align*}
\xi_{t} & =d_{t}-\nabla_{t}=\beta d_{t-1}+\left(1-\beta\right)g_{t}+\beta\left(\na f(x_{t},\Xi_{t})-\na f(x_{t-1},\Xi_{t})\right)-\nabla_{t}\\
 & =\beta\xi_{t-1}+\beta Z_{t}+\left(1-\beta\right)\epsilon_{t}.
\end{align*}
This equation is true for $t=1$ again. We use this recursion for
all iterations to finish the proof.
\end{proof}

\subsection{Proof of Lemma \ref{lem:basic-ineq}}

\begin{proof}
We will first prove $\Delta_{t+1}-\Delta_{t}\leq-\eta\left\Vert \na_{t}\right\Vert +2\eta\left\Vert \xi_{t}\right\Vert +\frac{\eta^{2}L}{2}$.
This result has been shown in \cite{cutkosky2021high}. We provide
the analysis below for completeness. Starting with Fact \ref{fact:fact-1}
\begin{align*}
\Delta_{t+1}-\Delta_{t} & =F(x_{t+1})-F(x_{t})\leq\langle\na_{t},x_{t+1}-x_{t}\rangle+\frac{L}{2}\left\Vert x_{t+1}-x_{t}\right\Vert ^{2}\\
 & =-\eta\langle\na_{t},\frac{d_{t}}{\left\Vert d_{t}\right\Vert }\rangle+\frac{\eta^{2}L}{2}=-\eta\left\Vert d_{t}\right\Vert +\eta\langle\xi_{t},\frac{d_{t}}{\left\Vert d_{t}\right\Vert }\rangle+\frac{\eta^{2}L}{2}\\
 & \overset{(a)}{\leq}-\eta\left\Vert d_{t}\right\Vert +\eta\left\Vert \xi_{t}\right\Vert +\frac{\eta^{2}L}{2}\overset{(b)}{\leq}-\eta\left\Vert \na_{t}\right\Vert +2\eta\left\Vert \xi_{t}\right\Vert +\frac{\eta^{2}L}{2}
\end{align*}
where $(a)$ is by $\langle\xi_{t},\frac{d_{t}}{\left\Vert d_{t}\right\Vert }\rangle\leq\left\Vert \xi_{t}\right\Vert \left\Vert \frac{d_{t}}{\left\Vert d_{t}\right\Vert }\right\Vert =\left\Vert \xi_{t}\right\Vert $
and $(b)$ is due to $\left\Vert \na_{t}\right\Vert \leq\left\Vert d_{t}\right\Vert +\left\Vert \xi_{t}\right\Vert $.
Now summing up from $t=1$ to $\tau$, we obtain
\begin{align*}
 & \eta\sum_{t=1}^{\tau}\left\Vert \na_{t}\right\Vert +\Delta_{\tau+1}\leq\Delta_{1}+\frac{\tau\eta^{2}L}{2}+2\eta\sum_{t=1}^{\tau}\left\Vert \xi_{t}\right\Vert \\
\overset{(c)}{=} & \Delta_{1}+\frac{\tau\eta^{2}L}{2}+2\eta\sum_{t=1}^{\tau}\left\Vert \beta^{t}\xi_{0}+\beta\left(\sum_{s=1}^{t}\beta^{t-s}Z_{s}\right)+\left(1-\beta\right)\left(\sum_{s=1}^{t}\beta^{t-s}\epsilon_{s}\right)\right\Vert \\
\leq & \Delta_{1}+\frac{\tau\eta^{2}L}{2}+2\eta\sum_{t=1}^{\tau}\beta^{t}\left\Vert \xi_{0}\right\Vert +\beta\left\Vert \sum_{s=1}^{t}\beta^{t-s}Z_{s}\right\Vert +\left(1-\beta\right)\left\Vert \sum_{s=1}^{t}\beta^{t-s}\epsilon_{s}\right\Vert \\
\leq & \Delta_{1}+\frac{\tau\eta^{2}L}{2}+\frac{2\beta\eta\left\Vert \xi_{0}\right\Vert }{1-\beta}\indi_{\tau\geq1}+2\eta\sum_{t=1}^{\tau}\beta\left\Vert \sum_{s=1}^{t}\beta^{t-s}Z_{s}\right\Vert +\left(1-\beta\right)\left\Vert \sum_{s=1}^{t}\beta^{t-s}\epsilon_{s}\right\Vert \\
\overset{(d)}{\leq} & \Delta_{1}+\frac{\tau\eta^{2}L}{2}+\frac{3\beta\eta\sqrt{L\Delta_{1}}}{1-\beta}\indi_{\tau\geq1}+2\eta\sum_{t=1}^{\tau}\beta\left\Vert \sum_{s=1}^{t}\beta^{t-s}Z_{s}\right\Vert +\left(1-\beta\right)\left\Vert \sum_{s=1}^{t}\beta^{t-s}\epsilon_{s}\right\Vert 
\end{align*}
where we invoke Lemma \ref{lem:represent} in $(c)$; Fact \ref{fact:fact-2}
leads to $2\left\Vert \xi_{0}\right\Vert =2\left\Vert \na_{1}\right\Vert \leq3\sqrt{L\Delta_{1}}$
in $(d)$.
\end{proof}

\subsection{Proof of Lemma \ref{lem:Z-algo}}

\begin{proof}
For Algorithm \ref{alg:algo-1}, from the definition of $Z_{s}$,
we know
\[
\left\Vert Z_{s}\right\Vert =\left\Vert \indi_{s\geq2}\left(\na_{s-1}-\na_{s}\right)\right\Vert \leq\indi_{s\geq2}\left\Vert \na_{s-1}-\na_{s}\right\Vert \leq\indi_{s\geq2}L\left\Vert x_{s-1}-x_{s}\right\Vert \leq\eta L
\]
where we use the smoothness assumption and note that $\left\Vert x_{s-1}-x_{s}\right\Vert =\left\Vert \eta\frac{d_{s}}{\left\Vert d_{s}\right\Vert }\right\Vert =\eta$.
Hence,
\[
\left\Vert \sum_{s=1}^{t}\beta^{t-s}Z_{s}\right\Vert \leq\sum_{s=1}^{t}\beta^{t-s}\left\Vert Z_{s}\right\Vert \leq\sum_{s=1}^{t}\beta^{t-s}\eta L\leq\frac{\eta L}{1-\beta}.
\]

For Algorithm \ref{alg:algo-2}, let $t\in\left[T\right]$ be fixed
and consider $s\in\left[t\right]$. From the definition of $Z_{s}$,
we know $\beta^{t-s}Z_{s}$ is adapted to $\F_{s}$. Besides, 
\[
\E_{s}\left[\beta^{t-s}Z_{s}\right]=\beta^{t-s}\E_{s}\left[\indi_{s\geq2}\left(\nabla f(x_{s},\Xi_{s})-\nabla f(x_{s-1},\Xi_{s})+\na_{s-1}-\na_{s}\right)\right]=0
\]
which implies $\beta^{t-s}Z_{s}$ is a martingale difference sequence.
Note that we have $\left\Vert \beta^{t-1}Z_{1}\right\Vert =0$ and
for $s\geq2$, by the almost surely smoothness assumption,
\[
\left\Vert \beta^{t-s}Z_{s}\right\Vert \leq\left\Vert \nabla f(x_{s},\Xi_{s})-\nabla f(x_{s-1},\Xi_{s})\right\Vert +\left\Vert \na_{s}-\na_{s-1}\right\Vert \leq2L\left\Vert x_{s}-x_{s-1}\right\Vert =2\eta L.
\]
Besides, there is
\begin{align*}
\E_{s}\left[\left\Vert \beta^{t-s}Z_{s}\right\Vert ^{2}\right] & =\beta^{2t-2s}\E_{s}\left[\left\Vert \indi_{s\geq2}\left(\nabla f(x_{s},\Xi_{s})-\nabla f(x_{s-1},\Xi_{s})+\na_{s-1}-\na_{s}\right)\right\Vert ^{2}\right]\\
 & \leq\beta^{2t-2s}\indi_{s\geq2}\E_{s}\left[\left\Vert \nabla f(x_{s},\Xi_{s})-\nabla f(x_{s-1},\Xi_{s})\right\Vert ^{2}\right]\leq\beta^{2t-2s}\eta^{2}L^{2}.
\end{align*}
Now we invoke Lemma \ref{lem:ashok-ineq} to obtain that with probability
at least $1-\delta$, for any $\tau\in\left[t\right]$, there is
\begin{align*}
\left\Vert \sum_{s=1}^{\tau}\beta^{t-s}Z_{s}\right\Vert  & \leq6\eta L\log\frac{3}{\delta}+3\sqrt{\sum_{s=1}^{\tau}\beta^{2t-2s}\eta^{2}L^{2}\log\frac{3}{\delta}}\\
 & \leq3\eta L\left(2\log\frac{3}{\delta}+\sqrt{\frac{\log\frac{3}{\delta}}{1-\beta}}\right)\leq\frac{9\eta L\log\frac{3}{\delta}}{\sqrt{1-\beta}}.
\end{align*}
We choose $\tau=t$ and replace $\delta$ by $\frac{\delta}{T}$ to
finish the proof.
\end{proof}

\subsection{Proof of Lemma \ref{lem:eps-decomp}}

\begin{proof}
Starting with the definition of $\epsilon_{s}=\epsilon_{s}^{b}+\epsilon_{s}^{u}$
to get
\[
\left\Vert \sum_{s=1}^{t}\beta_{1}^{t-s}\epsilon_{s}\right\Vert \leq\left\Vert \sum_{s=1}^{t}\beta^{t-s}\epsilon_{s}^{u}\right\Vert +\left\Vert \sum_{s=1}^{t}\beta^{t-s}\epsilon_{s}^{b}\right\Vert .
\]
Now we define the sequence $U_{s}^{t}$ for $s\in\left\{ 0\right\} \cup\left[t\right]$
\[
U_{s}^{t}=\begin{cases}
\bzero & s=0\\
\sgn\left(\sum_{i=1}^{s-1}U_{i}^{t}\right)\frac{\langle\sum_{i=1}^{s-1}\beta^{t-i}\epsilon_{i}^{u},\beta^{t-s}\epsilon_{s}^{u}\rangle}{\left\Vert \sum_{i=1}^{s-1}\beta^{t-i}\epsilon_{i}^{u}\right\Vert } & s\neq0\text{ and }\sum_{i=1}^{s-1}\beta^{t-i}\epsilon_{i}^{u}\neq\bzero\\
\bzero & s\neq0\text{ and }\sum_{i=1}^{s-1}\beta^{t-i}\epsilon_{i}^{u}=\bzero
\end{cases}
\]
According to Lemma \ref{lem:ashok-decomp}, $U_{s}^{t}$ is a martingale
difference sequence satisfying $\left|U_{s}^{t}\right|\leq\left\Vert \beta^{t-s}\epsilon_{s}^{u}\right\Vert $
and
\begin{align*}
\left\Vert \sum_{s=1}^{t}\beta^{t-s}\epsilon_{s}^{u}\right\Vert  & \leq\left|\sum_{s=1}^{t}U_{s}^{t}\right|+\sqrt{\max_{s\in\left[t\right]}\left\Vert \beta^{t-s}\epsilon_{s}^{u}\right\Vert ^{2}+\sum_{s=1}^{t}\left\Vert \beta^{t-s}\epsilon_{s}^{u}\right\Vert ^{2}}\\
 & \leq\left|\sum_{s=1}^{t}U_{s}^{t}\right|+\sqrt{2\sum_{s=1}^{t}\left\Vert \beta^{t-s}\epsilon_{s}^{u}\right\Vert ^{2}-\E_{s}\left[\left\Vert \beta^{t-s}\epsilon_{s}^{u}\right\Vert ^{2}\right]+\E_{s}\left[\left\Vert \beta^{t-s}\epsilon_{s}^{u}\right\Vert ^{2}\right]}\\
 & \leq\left|\sum_{s=1}^{t}U_{s}^{t}\right|+\sqrt{2\left|\sum_{s=1}^{t}R_{s}^{t}\right|}+\sqrt{2\sum_{s=1}^{t}\E_{s}\left[\left\Vert \beta^{t-s}\epsilon_{s}^{u}\right\Vert ^{2}\right]}
\end{align*}
where $R_{s}^{t}=\left\Vert \beta^{t-s}\epsilon_{s}^{u}\right\Vert ^{2}-\E_{s}\left[\left\Vert \beta^{t-s}\epsilon_{s}^{u}\right\Vert ^{2}\right]$.
The proof is completed now.
\end{proof}

\subsection{Proof of Lemma \ref{lem:U}}

\begin{proof}
For any fixed $t\in\left[T\right]$, note that $U_{s}^{t}$ is a martingale
difference sequence satisfying $\left|U_{s}^{t}\right|\leq\left\Vert \beta^{t-s}\epsilon_{s}^{u}\right\Vert \leq2M$
(Lemmas \ref{lem:eps-bound} and \ref{lem:eps-decomp}). Hence by
Bernstein Iinequality (Lemma \ref{lem:bern}), we know
\[
\Pr\left[\left|\sum_{s=1}^{t}U_{s}^{t}\right|>a\text{ and }\sum_{s=1}^{t}\E_{s}\left[\left(U_{s}^{t}\right)^{2}\right]\leq F\log\frac{4T}{\delta}\right]\leq2\exp\left(-\frac{a^{2}}{2F\log\frac{4T}{\delta}+4Ma/3}\right).
\]
Choose $a>0$ here to satisfy
\[
2\exp\left(-\frac{a^{2}}{2F\log\frac{4T}{\delta}+4Ma/3}\right)=\frac{\delta}{2T}\Rightarrow a=\left(\frac{2}{3}M+\sqrt{\frac{4}{9}M^{2}+2F}\right)\log\frac{4T}{\delta}.
\]
Next, we take $F=\frac{10\sigma^{p}M^{2-p}}{1-\beta}$. Thus, with
probability at least $1-\frac{\delta}{2T}$, there is
\[
\left|\sum_{s=1}^{t}U_{s}^{t}\right|\leq\left(\frac{2}{3}+\sqrt{\frac{4}{9}+\frac{20\left(\sigma/M\right)^{p}}{1-\beta}}\right)M\log\frac{4T}{\delta}\leq\left(\frac{4}{3}+2\sqrt{\frac{5\left(\sigma/M\right)^{p}}{1-\beta}}\right)M\log\frac{4T}{\delta}
\]
or
\[
\sum_{s=1}^{t}\E_{s}\left[\left(U_{s}^{t}\right)^{2}\right]>\frac{10\sigma^{p}M^{2-p}}{1-\beta}\log\frac{4T}{\delta}.
\]
\end{proof}

\subsection{Proof of Lemma \ref{lem:R}}

\begin{proof}
For any fixed $t\in\left[T\right]$, by Lemma \ref{lem:eps-bound},
$R_{s}^{t}=\left\Vert \beta^{t-s}\epsilon_{s}^{u}\right\Vert ^{2}-\E_{s}\left[\left\Vert \beta^{t-s}\epsilon_{s}^{u}\right\Vert ^{2}\right]$
is a martingale difference sequence satisfying 
\[
\left|R_{s}^{t}\right|\leq\left\Vert \beta^{t-s}\epsilon_{s}^{u}\right\Vert ^{2}+\E_{s}\left[\left\Vert \beta^{t-s}\epsilon_{s}^{u}\right\Vert ^{2}\right]\leq8M^{2}.
\]
Hence by Bernstein inequality (Lemma \ref{lem:bern}), we know
\[
\Pr\left[\left|\sum_{s=1}^{t}R_{s}^{t}\right|>a\text{ and }\sum_{s=1}^{t}\E_{s}\left[\left(R_{s}^{t}\right)^{2}\right]\leq F\log\frac{4T}{\delta}\right]\leq2\exp\left(-\frac{a^{2}}{2F\log\frac{4T}{\delta}+16M^{2}a/3}\right).
\]
We choose $a>0$ to satisfy
\[
2\exp\left(-\frac{a^{2}}{2F\log\frac{4T}{\delta}+16M^{2}a/3}\right)=\frac{\delta}{2T}\Rightarrow a=\left(\frac{8}{3}M^{2}+\sqrt{\frac{64}{9}M^{4}+2F}\right)\log\frac{4T}{\delta}.
\]
We take $F=\frac{40\sigma^{p}M^{4-p}}{1-\beta}$ here to obtain with
probability at least $1-\frac{\delta}{2T}$, there is
\[
\left|\sum_{s=1}^{t}R_{s}^{t}\right|\leq\left(\frac{8}{3}+\sqrt{\frac{64}{9}+\frac{80\left(\sigma/M\right)^{p}}{1-\beta}}\right)M^{2}\log\frac{4T}{\delta}\leq\left(\frac{16}{3}+4\sqrt{\frac{5\left(\sigma/M\right)^{p}}{1-\beta}}\right)M^{2}\log\frac{4T}{\delta}
\]
or
\[
\sum_{s=1}^{t}\E_{s}\left[\left(R_{s}^{t}\right)^{2}\right]>\frac{40\sigma^{p}M^{4-p}}{1-\beta}\log\frac{4T}{\delta}.
\]
\end{proof}

\section{Proof of Theorem \ref{thm:thm-algo-1}\label{sec:app-proof-theory}}

The proof of Theorem \ref{thm:thm-algo-1} is almost the same as the
proof of Theorem \ref{thm:thm-algo-2} under our unified analysis
framework.

\begin{proof}
We will use induction to prove that the event $G_{\tau}=E_{\tau}\cap A_{\tau}\cap B_{\tau}$
holds with probability at least $1-\frac{\tau\delta}{T}$ for any
$\tau\in\left\{ 0\right\} \cup\left[T\right]$ where
\[
E_{\tau}=\left\{ \eta\sum_{s=1}^{t}\left\Vert \na_{s}\right\Vert +\Delta_{t+1}\leq2\Delta_{1},\forall t\leq\tau\right\} ;\enskip A_{\tau}=\cap_{t=1}^{\tau}a_{t};\enskip B_{\tau}=\cap_{t=1}^{\tau}b_{t};
\]
and
\begin{align*}
a_{t} & =\left\{ \left|\sum_{s=1}^{t}U_{s}^{t}\right|\leq\left(\frac{4}{3}+2\sqrt{\frac{5\left(\sigma/M\right)^{p}}{1-\beta}}\right)M\log\frac{4T}{\delta}\text{ or }\sum_{s=1}^{t}\E_{s}\left[\left(U_{s}^{t}\right)^{2}\right]>\frac{10\sigma^{p}M^{2-p}}{1-\beta}\log\frac{4T}{\delta}\right\} ;\\
b_{t} & =\left\{ \left|\sum_{s=1}^{t}R_{s}^{t}\right|\leq\left(\frac{16}{3}+4\sqrt{\frac{5\left(\sigma/M\right)^{p}}{1-\beta}}\right)M^{2}\log\frac{4T}{\delta}\text{ or }\sum_{s=1}^{t}\E_{s}\left[\left(R_{s}^{t}\right)^{2}\right]>\frac{40\sigma^{p}M^{4-p}}{1-\beta}\log\frac{4T}{\delta}\right\} .
\end{align*}
Note that $E_{0}=\left\{ \Delta_{1}\leq2\Delta_{1}\right\} $ can
be viewed as the whole probability space. Thus, we have $A_{0}=B_{0}=E_{0}$.
Another useful fact is that $A_{\tau}=A_{\tau-1}\cap a_{\tau}$ and
$B_{\tau}=B_{\tau-1}\cap b_{\tau}$.

Now we can start the induction. When $\tau=0$, $G_{0}=E_{0}=\left\{ \Delta_{1}\leq2\Delta_{1}\right\} $
holds with probability $1=1-\frac{\tau\delta}{T}$. Next, suppose
the induction hypothesis $\Pr\left[G_{\tau-1}\right]\geq1-\frac{(\tau-1)\delta}{T}$
is true for some $\tau\in\left[T\right]$. We will prove that $\Pr\left[G_{\tau}\right]\geq1-\frac{\tau\delta}{T}$
. To start with, we consider the following event 
\[
E_{\tau-1}\cap A_{\tau}\cap B_{\tau}=G_{\tau-1}\cap a_{\tau}\cap b_{\tau}.
\]
From Lemmas \ref{lem:U} and \ref{lem:R}, there are $\Pr\left[a_{\tau}\right]\geq1-\frac{\delta}{2T}$
and $\Pr\left[b_{\tau}\right]\geq1-\frac{\delta}{2T}$. Combining
with the induction hypothesis, we obtain$\Pr\left[E_{\tau-1}\cap A_{\tau}\cap B_{\tau}\right]\geq1-\frac{\tau\delta}{T}.$ 

Note that under the event $E_{\tau-1}$, there is $\Delta_{t}\leq\eta\sum_{s=1}^{t-1}\left\Vert \na_{s}\right\Vert +\Delta_{t}\leq2\Delta_{1}$,
$\forall t\leq\tau$ which implies $\left\Vert \na_{t}\right\Vert \overset{(a)}{\leq}\sqrt{2L\Delta_{t}}\leq2\sqrt{L\Delta_{1}}\overset{(b)}{\leq}\frac{M}{2}$,
$\forall t\leq\tau$ where $(a)$ is by Fact \ref{fact:fact-2} and
for $(b)$ we use $M\geq4\sqrt{L\Delta_{1}}$. Thus, in addition to
$\|\epsilon_{t}^{u}\|\leq2M$, the following two bounds hold for any
$t\leq\tau$ by Lemma \ref{lem:eps-bound}:
\begin{align}
\E_{t}\left[\left\Vert \epsilon_{t}^{u}\right\Vert ^{2}\right] & \le10\sigma^{p}M^{2-p}.\label{eq:u-upper}\\
\left\Vert \epsilon_{t}^{b}\right\Vert  & \le2\sigma^{p}M^{1-p}.\label{eq:b-upper}
\end{align}
Equipped with (\ref{eq:u-upper}) and (\ref{eq:b-upper}), we can
find that for any $t\leq\tau$
\begin{align}
\sum_{s=1}^{t}\E_{s}\left[\left(U_{s}^{t}\right)^{2}\right] & \leq\sum_{s=1}^{t}\E_{s}\left[\left\Vert \beta^{t-s}\epsilon_{s}^{u}\right\Vert ^{2}\right]\leq\sum_{s=1}^{t}\beta^{2t-2s}\cdot10\sigma^{p}M^{2-p}\leq\frac{10\sigma^{p}M^{2-p}}{1-\beta};\label{eq:E-U-alg1}\\
\sum_{s=1}^{t}\E_{s}\left[\left(R_{s}^{t}\right)^{2}\right] & \leq\sum_{s=1}^{t}\E_{s}\left[\left\Vert \beta^{t-s}\epsilon_{s}^{u}\right\Vert ^{4}\right]\leq\sum_{s=1}^{t}\beta^{4t-4s}\cdot4M^{2}\cdot10\sigma^{p}M^{2-p}\leq\frac{40\sigma^{p}M^{4-p}}{1-\beta}.\label{eq:E-R-alg1}
\end{align}
Combining with the definitions of events $a_{t}$ and $b_{t}$, (\ref{eq:E-U-alg1})
and (\ref{eq:E-R-alg1}) imply that under the event $E_{\tau-1}\cap A_{\tau}\cap B_{\tau}$,
the following two bounds hold for any $t\leq\tau$,
\begin{align}
\left|\sum_{s=1}^{t}U_{s}^{t}\right| & \leq\left(\frac{4}{3}+2\sqrt{\frac{5\left(\sigma/M\right)^{p}}{1-\beta}}\right)M\log\frac{4T}{\delta};\label{eq:U-upper-alg1}\\
\left|\sum_{s=1}^{t}R_{s}^{t}\right| & \leq\left(\frac{16}{3}+4\sqrt{\frac{5\left(\sigma/M\right)^{p}}{1-\beta}}\right)M^{2}\log\frac{4T}{\delta}.\label{eq:R-upper-alg1}
\end{align}

Assuming $E_{\tau-1}\cap A_{\tau}\cap B_{\tau}$ holds, we invoke
Lemma \ref{lem:basic-ineq} for time $\tau$ to get
\begin{align*}
 & \eta\sum_{t=1}^{\tau}\left\Vert \na_{t}\right\Vert +\Delta_{\tau+1}\\
\leq & \Delta_{1}+\frac{\tau\eta^{2}L}{2}+\frac{3\beta\eta\sqrt{L\Delta_{1}}}{1-\beta}+2\eta\sum_{t=1}^{\tau}\beta\left\Vert \sum_{s=1}^{t}\beta^{t-s}Z_{s}\right\Vert +\left(1-\beta\right)\left\Vert \sum_{s=1}^{t}\beta^{t-s}\epsilon_{s}\right\Vert \\
\overset{(c)}{\leq} & \Delta_{1}+\frac{\tau\eta^{2}L}{2}+\frac{3\beta\eta\sqrt{L\Delta_{1}}}{1-\beta}+\frac{2\beta\tau\eta^{2}L}{1-\beta}+2\eta\left(1-\beta\right)\sum_{t=1}^{\tau}\left\Vert \sum_{s=1}^{t}\beta^{t-s}\epsilon_{s}\right\Vert \\
\leq & \Delta_{1}+\frac{2\tau\eta^{2}L}{1-\beta}+\frac{3\beta\eta\sqrt{L\Delta_{1}}}{1-\beta}+2\eta\left(1-\beta\right)\sum_{t=1}^{\tau}\left\Vert \sum_{s=1}^{t}\beta^{t-s}\epsilon_{s}\right\Vert \\
\overset{(d)}{\leq} & \Delta_{1}+\frac{2\tau\eta^{2}L}{1-\beta}+\frac{3\beta\eta\sqrt{L\Delta_{1}}}{1-\beta}+40\tau\eta M\left(1-\beta\right)\log\frac{4T}{\delta}\\
\overset{(e)}{\leq} & \Delta_{1}+\frac{\Delta_{1}}{3}+\frac{\Delta_{1}}{3}+\frac{\Delta_{1}}{3}\leq2\Delta_{1}
\end{align*}
where $(c)$ is due to Lemma \ref{lem:Z-algo}. The bound of $\|\sum_{s=1}^{t}\beta^{t-s}\epsilon_{s}\|$
in $(d)$ is totally the same as it in (\ref{eq:2-Delta}) in the
proof of Theorem \ref{thm:thm-algo-2}. We plug in the choice of $\eta=\sqrt{\frac{\left(1-\beta\right)\Delta_{1}}{6TL}}\land\frac{1-\beta}{9\beta}\sqrt{\frac{\Delta_{1}}{L}}\land\frac{\Delta_{1}}{120TM\left(1-\beta\right)\log\frac{4T}{\delta}}$
in $(e)$. Now note this result implies the event $e_{\tau}=\left\{ \eta\sum_{t=1}^{\tau}\left\Vert \na_{t}\right\Vert +\Delta_{\tau+1}\leq2\Delta_{1}\right\} $
is a superset of $E_{\tau-1}\cap A_{\tau}\cap B_{\tau}$. Therefore
\[
\Pr\left[G_{\tau}\right]=\Pr\left[E_{\tau-1}\cap e_{\tau}\cap A_{\tau}\cap B_{\tau}\right]=\Pr\left[E_{\tau-1}\cap A_{\tau}\cap B_{\tau}\right]\geq1-\frac{\tau\delta}{T}.
\]
Hence, the induction is completed.

Finally we know $\Pr\left[E_{T}\right]\geq\Pr\left[G_{T}\right]\geq1-\delta$
which implies with probability at least $1-\delta$, $\eta\sum_{t=1}^{T}\left\Vert \na_{t}\right\Vert +\Delta_{T+1}\leq2\Delta_{1}$.
By plugging our choices of $\beta$, $M$ and $\eta$, we conclude
\begin{align*}
 & \frac{1}{T}\sum_{t=1}^{T}\left\Vert \na_{t}\right\Vert \leq\frac{2\Delta_{1}}{\eta T}\\
= & \frac{2\Delta_{1}}{\sqrt{\frac{T\left(1-\beta\right)\Delta_{1}}{6L}}\land\frac{T\left(1-\beta\right)}{9\beta}\sqrt{\frac{\Delta_{1}}{L}}\land\frac{\Delta_{1}}{120M\left(1-\beta\right)\log\frac{4T}{\delta}}}\\
= & O\left(\frac{\sqrt{L\Delta_{1}}}{\sqrt{T\left(1-\beta\right)}}\lor\frac{\beta\sqrt{L\Delta_{1}}}{T\left(1-\beta\right)}\lor M\log\frac{T}{\delta}\right)\\
= & O\left(\frac{\sqrt{L\Delta_{1}}}{\sqrt{T\left(1-\beta\right)}}\lor\frac{\beta\sqrt{L\Delta_{1}}}{T\left(1-\beta\right)}\lor\sigma\left(1-\beta\right)^{1-1/p}\log\frac{T}{\delta}\lor\sqrt{L\Delta_{1}}\left(1-\beta\right)\log\frac{T}{\delta}\right).\\
= & O\left(\frac{\sqrt{L\Delta_{1}}}{T^{\frac{p-1}{3p-2}}}\lor\frac{\sigma\log\frac{T}{\delta}}{T^{\frac{p-1}{3p-2}}}\lor\frac{\sqrt{L\Delta_{1}}\log\frac{T}{\delta}}{T^{\frac{p}{3p-2}}}\right).
\end{align*}
\end{proof}

\end{document}